\def\eqref#1{equation~\ref{#1}}
\def\1{\bm{1}}
\def\vtheta{{\bm{\theta}}}
\def\vTheta{{\bm{\Theta}}}
\def\vomega{{\bm{\omega}}}
\def\vp{{\bm{p}}}
\def\vt{{\bm{t}}}
\def\vx{{\bm{x}}}
\def\vz{{\bm{z}}}
\DeclareMathAlphabet{\mathsfit}{\encodingdefault}{\sfdefault}{m}{sl}
\SetMathAlphabet{\mathsfit}{bold}{\encodingdefault}{\sfdefault}{bx}{n}
\def\gC{{\mathcal{C}}}
\def\gD{{\mathcal{D}}}
\def\gF{{\mathcal{F}}}
\def\gO{{\mathcal{O}}}
\def\gP{{\mathcal{P}}}
\def\gR{{\mathcal{R}}}
\def\gS{{\mathcal{S}}}
\def\gX{{\mathcal{X}}}
\def\gY{{\mathcal{Y}}}
\newcommand{\E}{\mathbb{E}}
\newcommand{\R}{\mathbb{R}}
\def\ttp{{\tt p}}
\def\ttP{{\tt P}}
\theoremstyle{definition}
\newtheorem{proposition}{Proposition}
\newtheorem{example}{Example}
\title{Should Ensemble Members Be Calibrated?}
\author{%
	Xixin Wu, Mark Gales \\
	Department of Engineering\\ University of Cambridge\\
	\texttt{\{xw369,mjfg\}@eng.cam.ac.uk} \\
	% examples of more authors
	% \And
	% Coauthor \\
	% Affiliation \\
	% Address \\
	% \texttt{email} \\
	% \AND
	% Coauthor \\
	% Affiliation \\
	% Address \\
	% \texttt{email} \\
	% \And
	% Coauthor \\
	% Affiliation \\
	% Address \\
	% \texttt{email} \\
	% \And
	% Coauthor \\
	% Affiliation \\
	% Address \\
	% \texttt{email} \\
}
\begin{document}
	
	\maketitle
	
	\begin{abstract}
		Underlying the use of statistical approaches for a wide range of applications is the assumption that the probabilities obtained from a statistical model are representative of the ``true'' probability that event, or outcome, will occur. Unfortunately, for modern deep neural networks this is not the case, they are often observed to be poorly calibrated. Additionally, these deep learning approaches make use of large numbers of model parameters, motivating the use of Bayesian, or ensemble approximation, approaches to handle issues with parameter estimation. This paper explores the application of calibration schemes to deep ensembles from both a theoretical perspective and empirically on a standard image classification task, CIFAR-100. The underlying theoretical requirements for calibration, and associated calibration criteria, are first described. It is shown that well calibrated ensemble members will not necessarily yield a well calibrated ensemble prediction, and if the ensemble prediction is well calibrated its performance cannot exceed that of the average performance of the calibrated ensemble members. On CIFAR-100 the impact of calibration for ensemble prediction, and associated calibration is evaluated. Additionally the situation where multiple different topologies are combined together is discussed.
	\end{abstract}

	\section{Introduction}
	Deep learning approaches achieve state-of-the-art performance in a wide range of applications, including image classification.  However, these networks tend to be overconfident in their predictions, they often exhibit poor calibration.  A system is well calibrated, if when the system makes a prediction with probability of 0.6 then 60\% of the time that prediction is correct.  Calibration is very important in deploying system, especially in risk-sensitive tasks, such as  medicine \citep{jiang2012calibrating}, auto-driving \citep{bojarski2016end}, and economics \citep{gneiting2007probabilistic}.
	It was shown by \citet{niculescu2005predicting} that shallow neural networks are well calibrated.  However, \citet{guo2017calibration} found that more complex neural network model with deep structures do not exhibit the same behaviour. This work motivated recent research into calibration for general deep learning systems.
	Previous research has mainly examined calibration based on samples from the true data distribution $\{\vx^{(i)}, y^{(i)}\}_{i=1}^N\sim \ttp(\vx, {\bm \omega}), y^{(i)}\in\{\omega_1,...,\omega_K\} $  \citep{zadrozny2002transforming,vaicenavicius2019evaluating}. This analysis relies on the limiting behaviour as $N\rightarrow+\infty$ to define a well calibrated system
	\begin{eqnarray}
	\ttP(y=\hat{y}|\ttP(\hat{y}|\vx;\vtheta)=p) = p \iff
	\lim_{N\rightarrow+\infty} \sum_{i\in\gS_j^p}\frac{\delta(y^{(i)}, \hat{y}^{(i)})}{|\gS_j^p|}  = p \label{eq:sample_limit_cal}
	\end{eqnarray}
	where $ \gS^p_j = \{i|\ttP(\hat{y}^{(i)}=j|\vx^{(i)};\vtheta)=p,i=1,...,N\}$ and $\hat{y}^{(i)}$ the model prediction for $\vx^{(i)}$.  $\delta(s,t)=1$ if $s=t$, otherwise 0. However, Eq.~(\ref{eq:sample_limit_cal}) doesn't explicitly reflect the relation between  $\ttP(y=\hat{y}|\ttP(\hat{y}|\vx;\vtheta)=p)$ and the underlying data distribution $\ttp(\vx, y)$.  In this work we examine this explicit relationship and use it to define a range of calibration evaluation criteria, including the standard sample-based criteria.
	
	One issue with deep-learning approaches is the large number of model parameters associated with the networks. Deep ensembles~\citep{lakshminarayanan2017simple} is a simple, effective,  approach for handling this problem. It has been found to improve performance, as well as allowing measures of uncertainty.  
	%%However, it hasn't been theoretically analyzed that why ensembling helps calibration. 
	In recent literature there has been ``contradictory'' empirical observations about the relationship between the calibration of the members of the ensemble and the calibration of the final ensemble prediction \citep{rahaman2020uncertainty,wen2020improving}.  In this paper, we examine the underlying theory and empirical results relating to calibration with ensemble methods.  We found, both theoretically and empirically, that ensembling multiple calibrated models decreases the confidence of final prediction, resulting in an ill-calibrated ensemble prediction. To address this, strategies to calibrate the final ensemble prediction, rather than individual members, are required. 
	Additionally we empiricaly examine the situation where the ensemble is comprised of models with different topologies, and resulting complexity/performance, requiring non-uniform ensemble averaging.
	%%the ensemble members.
	
	%%We further extend the analysis to ensemble of different model topologies.  We show that combining calibrated structures with Bayesian model evidence-based weights is better than naively averaging the structures.
	
	In this study, we focus on post-hoc calibration of ensemble, based on temperature annealing. \citet{guo2017calibration} conducted a thorough comparison of various existing post-hoc calibration methods and found that temperature scaling was a simple, fast, and often highly effective approach to calibration. However, standard temperature scaling acts globally for all regions of the input samples, i.e. all logits are scaled towards one single direction, either increasing or decreasing the distribution entropy.  To address this constraint, that may hurt some legitimately confident predictions, we investigate the effect of region-specific temperatures.  Empirical results demonstrate the effectiveness of this approach, with minimal increase in the number of calibration parameters.  
	
	\section{Related Work}
	Calibration is inherently related to uncertainty modeling.  Two of the most important scopes of calibration are calibration evaluation and calibration system construction.  
	One method to assessing calibration is the reliability diagram~\citep{vaicenavicius2019evaluating,brocker2012estimating}. Though informative,  It is still desirable to have an overall metric.   \citet{widmann2019calibration} investigate different distances in the probability simplex for estimating calibration error.  \citet{nixon2019measuring} point out the problem of fixed spaced binning scheme,  bins with few predictions may have low-bias but high-variance measurement.  Calibration error measure adaptive to dense populated regions have also been proposed \citep{nixon2019measuring}.  
	\citet{vaicenavicius2019evaluating} treated the calibration evaluation as hypotheses tests. All these approaches examine calibration criteria from a sample-based perspective, rather than as a function of the underlying data distribution which is used in the thoretical analysis in this work.
	
	There are two main approaches to calibrating systems. The first is to recalibrate the uncalibrated systems with post-hoc calibration mapping, e.g. Platt scaling \citep{platt1999probabilistic}, isotonic regression \citep{zadrozny2002transforming}, Dirichlet calibration \citep{kull2017beta,kull2019beyond}. The second is to directly build calibrated systems, via: (i) improving model structures, e.g.  deep convolutional Gaussian processes \citep{tran2019calibrating}; (ii) data augmentation, e.g. adversarial samples \citep{hendrycks2019benchmarking,stutz2019confidence} or Mixup \citep{zhang2017mixup}; (iii) minimize calibration error during training \citep{kumar2018trainable}.
	Calibration based on histogram binning \citep{zadrozny2001obtaining}, Bayesian binning \citep{naeini2015obtaining} and scaling binning \citep{kumar2019verified} are related to our proposed dynamic temperature scaling, in the sense that the samples are divided into regions and separate calibration mapping are applied.  However, our method can preserve the property that all predictions belonging to one sample sum to 1.  The region-based classifier by \citet{kuleshov2015calibrated} is also related to our approach.
	
	Ensemble diversity has been proposed for improved calibration \citep{raftery2005using,stickland2020diverse}.  In \citet{zhong2013accurate}, ensembles of SVM, logistic regressor, boosted decision trees are investigated, where the combination weights of calibrated probabilities is based on AUC of ROC.  However, AUC is not comparable between different models as discussed in \citet{ashukha2020pitfalls}. In this work we investigate the combination of different deep neural network structures. The weights assigned to the probabilities is optimised using a likelihood-based metric.
	
	\section{Calibration Framework}
	Let $\gX\subseteq\R^d$ be the $d$-dimensional input space and $\gY=\{\omega_1,...,\omega_K\}$ be the discrete output space consisting of $K$ classes.
	The true underlying joint distribution for the data is $\ttp(\vx,\omega)=\ttP(\omega|\vx)\ttp(\vx), \vx\in\gX, \omega\in\gY$.  Given some training data $\gD\sim\ttp(\vx,{\bm \omega})$, a model $\vtheta$ is trained to predict the distribution $\ttP({\bm \omega}|\vx;\vtheta)$ given observation features. For a calibrated system the average predicted posterior probability should equate to the average posterior of the underlying distribution for a specific probability region. Two extreme
	cases will always yield perfect calibration. First when the predictions that are the same, and equal to the class prior for all inputs, ${\tt P}(\omega_j|{\bm x};{\bm \theta})={\tt P}(\omega_j)$. Second
	the minimum Bayes' risk classifier is obtained,  ${\tt P}(\omega_j|{\bm x};{\bm \theta})=\frac{{\tt p}({\bm x},\omega_j)}{\sum_{k=1}^{K}{\tt p}({\bm x},\omega_k)}$.
	Note that perfect calibration doesn't imply high accuracy, as shown by the  system predicting the prior distribution.

	\subsection{Distribution Calibration}
	A system is calibrated if the predictive probability values can accurately indicate the portion of correct predictions.  
	%%For example, for the predictions with probability of 0.4, there is a chance of 40\% that the predictions are correct according to the true distribution.  
	\textit{Perfect calibration} for a system that yields $\ttP(\vomega|\vx;\vtheta)$ when the training and test data are obtained form the joint distribution $\ttp(\vx,\vomega)$ can be defined as:
	\begin{eqnarray}
	\int_{{\bm x}\in\mathcal{R}^{p}_j({\bm \theta},\epsilon)} {\tt P}(\omega_j|{\bm x}; {\bm \theta}){\tt p}({\bm x})\mathrm{d}{\bm x}
	&=& \int_{{\bm x}\in\mathcal{R}^{p}_j({\bm \theta},\epsilon)} {\tt P}(\omega_j|{\bm x}){\tt p}({\bm x})\mathrm{d}{\bm x}  \:\:\:\: \forall{ p},\omega_j,\epsilon\rightarrow0\label{eq:cal_def}\\
	\mathcal{R}^{p}_j({\bm \theta},\epsilon) &=& \Big\{ {\bm x}\Big||{\tt P}(\omega_j|{\bm x};{\bm \theta})-p|\leq \epsilon, {\bm x}\in \mathcal{X}\Big\} \label{eq:region}
	\end{eqnarray}
	
	$\mathcal{R}^{p}_j({\bm \theta},\epsilon)$ denotes the region of input space where the system predictive probability for class $\omega_j$ is sufficiently close, within error of $\epsilon$, to the probability $p$. A perfectly calibrated system will satisfy this expression for all regions, the expected predictive probability (left side of Eq.~(\ref{eq:cal_def})) is identical to the expected correctness, i.e., expected true probability (right side of Eq.~(\ref{eq:cal_def})). 
	
	$\mathcal{R}^{p}_j({\bm \theta},\epsilon)$ defines the region in which calibration is defined.  For \textit{top-label calibration}, only the most probable class is considered and the region defined in Eq.~(\ref{eq:region}) is modified to reflect this:
	\begin{eqnarray}
	\tilde{\mathcal{R}}^{p}_{j}({\bm \theta},\epsilon) = \mathcal{R}^{p}_{j}({\bm \theta},\epsilon) \cap  \Big\{ {\bm x}\Big| \omega_j=\arg\max_{\omega}{\tt P}(\omega|{\bm x};{\bm \theta}), {\bm x}\in \mathcal{X} \Big\} \label{eq:top_region}
	\end{eqnarray}
	Eq.~(\ref{eq:top_region}) is a strict subset of Eq.~(\ref{eq:region}). As the two calibration regions are different between calibration and top-label calibration,  perfect calibration doesn't imply top-label calibration, and vise versa.  A simple illustrative example of this property is given in \ref{app:toy_datasets}.
	Binary classification, $K=2$, is an exception to this general rule, as  the regions for top-label calibration are equivalent to those for perfect calibration, i.e. $\tilde{\mathcal{R}}^{p}_{j}({\bm \theta},\epsilon) = \mathcal{R}^{p}_{j}({\bm \theta},\epsilon)$.  Hence, perfect calibration is equivalent to top-label calibration for binary classification~\citep{nguyen2015posterior}.
	
	Eq.~(\ref{eq:cal_def}) defines the requirements for a perfectly calibrated system. It is useful to define metrics that allow how close a system is to perfect calibration to be assessed. Let the region calibration error be:
	\begin{eqnarray}
	\gC_j^p(\vtheta,\epsilon)&=&\int_{\vx\in\gR_j^p(\vtheta,\epsilon)} (\ttP(\omega_j|\vx;\vtheta)-\ttP(\omega_j|\vx))\ttp(\vx)\mathrm{d}\vx \label{eq:cal_error}
	\end{eqnarray}
	This then allows two forms of expected calibration losses to be defined
	\begin{eqnarray}
	{\tt ACE}(\vtheta) = %\int_0^1\Bigg|\sum_{j=1}^K\gC_j^p(\vtheta,\epsilon)\Bigg|\mathrm{d}p \label{eq:theoretical_ACE}; \:\:\:\: {\tt ACCE}(\vtheta) = \sum_{j=1}^K \int_0^1\Big|\gC_j^p(\vtheta,\epsilon)\Big|\mathrm{d}p \\
	\frac{1}{K} \int_0^1\Bigg|\sum_{j=1}^K\gC_j^p(\vtheta,\epsilon)\Bigg|\mathrm{d}p \label{eq:theoretical_ACE}; \:\:\:\: {\tt ACCE}(\vtheta) = \frac{1}{K} \sum_{j=1}^K \int_0^1\Big|\gC_j^p(\vtheta,\epsilon)\Big|\mathrm{d}p 
	\end{eqnarray}
	All Calibration Error (ACE) only considers the expected calibration error for a particular probability, irrespective of the class  associated with the data\footnote{In this section the references given refer to the sample-based equivalent versions of the distributional calibration expressions in this paper using the same concepts, rather than identical expressions.}~\citep{hendrycks2018deep}.
	Hence, All Class Calibration Error (ACCE) that requires that all classes minimises the calibration error for all probabilities is advocated by~\citet{kull2019beyond,kumar2019verified}.  
	\citet{nixon2019measuring} propose the Thresholded Adaptive Calibration Error (TACE) to consider only the prediction larger than a threshold, and it can be described as a special case of ACCE by replacing the integral range. \citet{naeini2015obtaining} also propose to only consider the region with maximum error.
	
	Though measures such as ACE and ACCE require consistency of the expected posteriors with the true distribution, for tasks with multiple classes, particularly large numbers of classes, the same weight is given to the ability of the model to assign low probabilities to highly unlikely classes, and high probabilities to the ``correct" class. For systems with large numbers of classes this can yield artificially low scores. To address this problem it is more common to replace the regions in Eq.~(\ref{eq:cal_error}) with the top-label regions in Eq.~(\ref{eq:top_region}), to give a top-label calibration error
	${\tilde \gC}_j^p(\vtheta,\epsilon)$. This then yields the expected top-label equivalents of ACCE and ACE, Expected Class Calibration Error (ECCE) and Expected Calibration Error (ECE).  Here for example ECE by~\citet{guo2017calibration} is expressed as
	\begin{eqnarray}
	%% {\tt ECCE}(\vtheta) &=&  \sum_{j=1}^K \int_0^1 \Bigg|\int_{\vx\in\tilde{\gR}_j^p(\vtheta,\epsilon)}(\ttP(\omega_j|\vx,\vtheta)-\ttP(\omega_j|\vx))\ttp(\vx)\mathrm{d}\vx \Bigg|\mathrm{d}p\\
	{\tt ECE}(\vtheta) 
	%%=  \int_0^1 \Bigg|
	%%\sum_{j=1}^K{\tilde C}_j^p(\vtheta,\epsilon)
	%%\Bigg|\mathrm{d}p 
	&=& 
	\int_0^1 \Bigg|
	\sum_{j=1}^K \int_{\vx\in\tilde{\gR}_j^p(\vtheta,\epsilon)}(\ttP(\omega_j|\vx;\vtheta)-\ttP(\omega_j|\vx))\ttp(\vx)\mathrm{d}\vx \label{eq:theoretical_ECE}
	\Bigg|\mathrm{d}p\\
	&=& \int_0^1 \gO(\vtheta,p)|{\tt Conf}(\vtheta, p) - {\tt Acc}(\vtheta, p)|\mathrm{d}p
	\end{eqnarray}
	where $
	\gO(\vtheta,p) = \sum_{j=1}^K\int_{\vx\in\tilde{\gR}_j^p(\vtheta,\epsilon)}\ttp(\vx)\mathrm{d}{\bm x}$ is the fraction observations that are assigned to that particular probability and ${\tt Conf}(\vtheta, p)$ and ${\tt Acc}(\vtheta, p)$ are the ideal distribution accuracy and confidences from the model for that probability. 
	For more details see the appendix. 
	
	\subsection{Sample-based Calibration}
	Usually only samples from the true joint distribution are available. Any particular training set is drawn from the distribution to yield 
	\[
	\mathcal{D}=\Big\{ \{{\bm x}^{(i)}, y^{(i)}\}\Big\}_{i=1}^N,\:\:\:\:\{{\bm x}^{(i)}, y^{(i)}\}\sim {\tt p}({\bm x},{\bm  \omega}),\:\:\:\:y^{(i)}\in \{\omega_1,...,\omega_K\}.
	\]
	The region defined in Eq.~(\ref{eq:region}) is now changed to be indices of the samples:
	\begin{eqnarray}
	\mathcal{S}_j^p({\bm \theta},\epsilon) = \Big\{ i \Big| |{\tt P}(\omega_j|{\bm x}^{(i)};{\bm \theta})-p|\leq \epsilon, {\bm x}^{(i)}\in \mathcal{D}\Big\}, \label{eq:sample_region}
	\end{eqnarray}
	The sample-based version of ``perfect" calibration in Eq.~(\ref{eq:cal_def}) can then be expressed as:
	\begin{eqnarray}
	\frac{1}{|\mathcal{S}_j^p({\bm \theta},\epsilon)|}\sum_{i\in \mathcal{S}_j^p({\bm \theta},\epsilon)}{\tt P}(\omega_j|{\bm x}^{(i)};{\bm \theta})=\frac{1}{|\mathcal{S}_j^p({\bm \theta},\epsilon)|}\sum_{i\in \mathcal{S}_j^p({\bm \theta},\epsilon)}\delta(y^{(i)},\omega_j),\:\:\:\:\forall p,\omega_j,\epsilon\rightarrow0 \label{eq:sample_cal_def}
	\end{eqnarray}
	as $N\rightarrow\infty$.
	When considering finite data, in this case $N$ samples, it is important to set $\epsilon$ appropriately.
	Setting different $\epsilon$ yields different regions and leads to different calibration results~\citep{kumar2019verified}.  Thus it is important to specify $\epsilon$ when defining calibration for a system.  
	
	Similarly, the distribution form of top-label calibration can be written in terms of samples as Eq.~(\ref{eq:top_region}), with different regions considered:
	\begin{eqnarray}
	\tilde{\mathcal{S}}_j^p({\bm \theta},\epsilon) =\mathcal{S}_j^p({\bm \theta},\epsilon) \cap \Big\{ i \Big| \omega_j=\arg\max_{\omega}{\tt P}(\omega|{\bm x}^{(i)};{\bm \theta}), {\bm x}^{(i)}\in \mathcal{D}\Big\} \label{eq:sample_top_region}
	\end{eqnarray}
	
	The sample-based calibration losses in region $\gS_j^p(\vtheta,\epsilon)$ can be defined based on Eq.~(\ref{eq:sample_cal_def}).
	%% \begin{eqnarray}
	%% \gC_j^p(\vtheta,\epsilon) &=&
	%% \sum_{i\in \gS_j^p(\vtheta,\epsilon)} 
	%\Big( \frac{\ttP(\omega_j|\vx^{(i)},\vtheta)}{|\mathcal{S}_j^p%% ({\bm \theta},\epsilon)|}-\frac{\delta(y^{(i)},\omega_j)}{|\%% mathcal{S}_j^p({\bm \theta},\epsilon)|}
	%% \ttP(\omega_j|\vx^{(i)},\vtheta) - \delta(y^{(i)},\omega_j)
	%% \Big)
	% \end{eqnarray}
	For example ACE in Eq.~(\ref{eq:theoretical_ACE}) can be expressed in its sample-based form~\citep{hendrycks2018deep}
	\begin{eqnarray}
	{\tt ACE}(\vtheta,\epsilon) 
	= \frac{1}{NK} \sum_{p\in\gP(\epsilon)} \Bigg| \sum_{j=1}^K \sum_{i\in\gS_j^p(\vtheta,\epsilon)}\Big(\ttP(\omega_j|\vx^{(i)};\vtheta)-\delta(y^{(i)},\omega_j)
	\Big)\Bigg| 
	\end{eqnarray}
	where $\gP(\epsilon)=\{p|p=\min\{1,(2z-1)\epsilon\}, z\in \mathbb{Z}^+ \}$, and $\mathbb{Z}^+$ is the set of positive integers.
	The measure of ${\tt ECE}$ relating to Eq.~(\ref{eq:theoretical_ECE}), which only considers the top regions in Eq.~(\ref{eq:sample_top_region}) can be defined as~\citet{guo2017calibration}
	\begin{eqnarray}
	{\tt ECE}(\vtheta,\epsilon) &=& \frac{1}{N} \sum_{p\in\gP(\epsilon)} \Bigg| \sum_{j=1}^K \sum_{i\in\tilde{\gS}_j^p(\vtheta,\epsilon)}\Big(\ttP(\omega_j|\vx^{(i)};\vtheta)-\delta(y^{(i)},\omega_j)
	\Big)\Bigg| \\
	&=&
	\sum_{p\in\gP(\epsilon)}
	\frac{\left(\sum_{j=1}^K|\tilde{\gS}_j^p(\vtheta,\epsilon)|\right)}{N}
	\Bigg|
	{\tt Conf}(\vtheta, p) - {\tt Acc}(\vtheta, p)
	\Bigg|
	\end{eqnarray}
	It should be noted that for a finite number of samples, the regions $\gS_j^p(\vtheta,\epsilon)$ and $\tilde{\gS}_j^p(\vtheta,\epsilon)$ derived from the samples can be different from the theoretical regions, leading to difference between theoretical calibration error measures and the values estimated from the finite samples.  This is also referred to as  ``estimator randomness'' by \citet{vaicenavicius2019evaluating}.
	An example is given in \ref{app:toy_datasets} to illustrate this mismatch.
	%For these sample based versions of calibration errors the weighting term, $\gamma^p_j$, is not required as the approproriate weighting is obtained from the relative frequency of the samples, and associated classes, drawn from the ideal distributioin.
	
	The simplest region specification for calibration is to set $\epsilon=1$. In this case, $|\mathcal{S}_j^p({\bm \theta},1)|=N$, and the ``minimum" perfect calibration requirement for a system with parameters ${\bm \theta}$ becomes
	\begin{eqnarray}
	\frac{1}{N}\sum_{i=1}^N{\tt P}(\omega_j|{\bm x}^{(i)};{\bm \theta}) = \frac{1}{N}\sum_{i=1}^N\delta(y^{(i)},\omega_j),\:\:\:\:\forall \omega_j \label{eq:global_cal}
	\end{eqnarray}
	This is also referred to as \textit{global calibration} in this paper.  
	Similarly, \textit{global top-label calibration} can be defined as
	\begin{eqnarray}
	\frac{1}{N}\sum_{i=1}^{N}{\tt P}(\hat{y}^{(i)}|{\bm x}^{(i)};{\bm \theta}) = \frac{1}{N} \sum_{i=1}^{N} \delta({y}^{(i)}, \hat{y}^{(i)}),\:\:\:\: \hat{y}^{(i)}=\arg\max_{\omega} {\tt P}(\omega|{\bm x}^{(i)};{\bm \theta}) \label{eq:global_top_label_cal}
	\end{eqnarray}
	
	\section{Ensemble Calibration}
	An interesting question when using ensembles is whether calibrating the ensemble members is sufficient to ensure calibrated predictions.
	Initially the ensemble model will be viewed as an approximation to Bayesian parameter estimation.
	Given training data $\gD$ , the prediction of class $\omega_j$ is:
	\begin{eqnarray}
	\ttP(\omega_j|\vx^*,\gD)&=&\E_{\vtheta\sim\ttp(\vtheta|\gD)}[\ttP(\omega_j|\vx^*;\vtheta)]=\int\ttP(\omega_j|\vx^*;\vtheta)\ttp(\vtheta|\gD)\mathrm{d}\vx \nonumber \\
	&\approx& P(\omega_j|\vx^*;\vTheta) = \frac{1}{M}\sum_{m=1}^M\ttP(\omega_j|\vx^*;\vtheta^{(m)}); \:\:\:\: \vtheta^{(m)}\sim\ttp(\vtheta|\gD) \label{eq:ensbl}
	\end{eqnarray}
	where Eq.~(\ref{eq:ensbl}) is an ensemble, Monte-Carlo, approximation
	to the full Bayesian integration, with $\vtheta^{(m)}$ the $m$-th ensemble member parameters in the ensemble $\vTheta$.  The predictions of ensemble and members are $
	\hat{y}^*_{m} = \arg\max_{ \omega}\{{\tt P}({\omega}|{\bm x}^*;{\bm \theta}^{(m)})\}, \hat{y}^*_{{\tt E}} = \arg\max_{\omega}\Big\{\frac{1}{M}\sum_{m=1}^{M}{\tt P}({\omega}|{\bm x}^*;{\bm \theta}^{(m)})\Big\}$.

	\subsection{Theoretical Analysis}
	For ensemble methods it is only important that the final ensemble prediction, $\hat{y}_{{\tt E}}$, is well calibrated, rather than the individual ensemble members. It is useful to examine the relationship between this ensemble prediction and the predictions from the individual models when the ensemble members are calibrated. 
	Consider a particular top-label calibration region for the ensemble prediction, $\tilde{\gR}^p(\vTheta,\epsilon)$, related to Eq.~(\ref{eq:top_region}), the following expression is true
	\begin{eqnarray}
	%\int_{\vx\in\tilde{\gR}^p(\vTheta,\epsilon)} \ttP(y_{\tt E}|\vx,\vTheta)\ttp(\vx)\mathrm{d}\vx = 
	\int_{\vx\in\tilde{\gR}^p(\vTheta,\epsilon)}\frac{1}{M}\sum_{m=1}^M   \ttP(\hat{y}_{\tt E}|\vx;\vtheta^{(m)})\ttp(\vx)\mathrm{d}\vx 
	\leq \int_{\vx\in\tilde{\gR}^p(\vTheta,\epsilon)}\frac{1}{M}\sum_{m=1}^M   \ttP(\hat{y}_{m}|\vx;\vtheta^{(m)})\ttp(\vx)\mathrm{d}\vx \label{eq:ens_mem_conf} 
	\end{eqnarray}
	where the ensemble region is defined as $\tilde{\gR}^p(\vTheta,\epsilon) =  \Big\{ {\bm x}\Big||{\tt P}(\hat{y}_{\tt E}|{\bm x};\vTheta)-p|\leq \epsilon, {\bm x}\in \mathcal{X}\Big\}$.
	%% \:\:\:\:\ttP(y_{\tt E}|\vx, \vTheta)=\frac{1}{M}\sum_{m=1}^M \ttP(y_{\tt E}|\vx,\vtheta^{(m)})
	%% \end{eqnarray}
	For all regions $\tilde{\gR}^p(\vTheta,\epsilon)$ the ensemble is no more confident than the average confidence of individual member predictions. This puts bounds on the ensemble prediction performance
	if the resulting ensemble prediction is top-label calibrated, and all ensemble members yield the same region $\tilde{\gR}^p(\vTheta,\epsilon)$. 
	Here
	\begin{eqnarray}
	%\sum_{m=1}^M \frac{1}{M} \int_{\vx\in\tilde{\gR}^p(\vTheta,\epsilon)} \ttP(y_{\tt E}|\vx,\vtheta^{(m)})\ttp(\vx)\mathrm{d}\vx 
	\int_{\vx\in\tilde{\gR}^p(\vTheta,\epsilon)} \ttP(\hat{y}_{\tt E}|\vx;\vTheta)\ttp(\vx)\mathrm{d}\vx 
	= \int_{\vx\in\tilde{\gR}^p(\vTheta,\epsilon)} \ttP(\hat{y}_{\tt E}|\vx)\ttp(\vx)\mathrm{d}\vx 
	\label{eq:tlabelcal}
	\end{eqnarray}
	From Eq.~(\ref{eq:ens_mem_conf}) the left hand-side of this expression, the ensemble prediction confidence, cannot be greater that than the average ensemble member confidence. If the regions associated with the ensemble prediction and members are the same, then for top-label calibrated members this average confidence is the same as the average ensemble member accuracy. Furthermore, if the ensemble prediction is top-label calibrated, then this average ensemble member accuracy bounds the ensemble prediction accuracy.  Under these conditions ensembling the members yields no performance gains. 
	
	The above bound holds with the assumption that the members are calibrated on the same regions. Proposition~\ref{prop:calibration} in Appendix describes one trivial case when all members are calibrated on the same regions. Another case is the calibration on global regions. As shown in Proposition~\ref{prop:global_top_label_calibration}, at the global level, ensemble accuracy is still bounded.
	\begin{proposition}
		\label{prop:global_top_label_calibration}
		If all members and the corresponding ensemble are globally top-label calibrated, the ensemble performance is no better than the average performance of the members:
		\begin{eqnarray}
		\frac{1}{N}\sum_{i=1}^{N} \delta(y^{(i)}, \hat{y}^{(i)}_{\tt E}) &\leq& \frac{1}{M}\sum_{m=1}^{M} \Big(
		\frac{1}{N}\sum_{i=1}^{N} \delta(y^{(i)}, \hat{y}^{(i)}_{m})
		\Big)
		\end{eqnarray}
	\end{proposition}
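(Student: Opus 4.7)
The plan is to pass from empirical accuracies to empirical confidences by invoking global top-label calibration (Eq.~\ref{eq:global_top_label_cal}) for both the ensemble and its members, then bound the ensemble confidence by an average of member confidences using the pointwise arg-max inequality that lies behind Eq.~(\ref{eq:ens_mem_conf}).

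First I would rewrite the ensemble confidence on its chosen label using the Monte-Carlo form in Eq.~(\ref{eq:ensbl}):
\begin{eqnarray*}
{\tt P}(\hat{y}^{(i)}_{\tt E}\,|\,\vx^{(i)};\vTheta)
= \frac{1}{M}\sum_{m=1}^{M}{\tt P}(\hat{y}^{(i)}_{\tt E}\,|\,\vx^{(i)};\vtheta^{(m)}).
\end{eqnarray*}
Since $\hat{y}^{(i)}_{m}=\arg\max_{\omega}{\tt P}(\omega\,|\,\vx^{(i)};\vtheta^{(m)})$, the label picked by the ensemble is not the arg-max for member $m$ in general, so pointwise
${\tt P}(\hat{y}^{(i)}_{\tt E}\,|\,\vx^{(i)};\vtheta^{(m)})\leq {\tt P}(\hat{y}^{(i)}_{m}\,|\,\vx^{(i)};\vtheta^{(m)})$. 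Averaging over $i$ and swapping the order of summation over $m$ and $i$ gives
\begin{eqnarray*}
\frac{1}{N}\sum_{i=1}^{N}{\tt P}(\hat{y}^{(i)}_{\tt E}\,|\,\vx^{(i)};\vTheta)
\;\leq\;
\frac{1}{M}\sum_{m=1}^{M}\frac{1}{N}\sum_{i=1}^{N}{\tt P}(\hat{y}^{(i)}_{m}\,|\,\vx^{(i)};\vtheta^{(m)}).
\end{eqnarray*}

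Next I would replace each of the average confidences by the corresponding average accuracy using the global top-label calibration assumption. Applied to the ensemble, the left-hand side equals $\frac{1}{N}\sum_{i}\delta(y^{(i)},\hat{y}^{(i)}_{\tt E})$; applied to member $m$, the inner sum on the right equals $\frac{1}{N}\sum_{i}\delta(y^{(i)},\hat{y}^{(i)}_{m})$. Substituting both identities yields exactly the claimed inequality.

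The proof is short; the only genuinely subtle step is the arg-max inequality, which is where the asymmetry between ``member's own confident label'' and ``ensemble's collectively chosen label'' enters. Everything else is swapping sums and invoking the two calibration identities, so I do not anticipate a real obstacle beyond making clear that the ensemble label $\hat{y}^{(i)}_{\tt E}$ is, from each member's perspective, just some class — not necessarily its arg-max — which is precisely why the member's confidence on $\hat{y}^{(i)}_{\tt E}$ is dominated by its confidence on $\hat{y}^{(i)}_{m}$.
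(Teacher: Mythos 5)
Your argument is correct and is essentially the paper's own proof: the same three ingredients appear in the same roles --- the pointwise arg-max inequality ${\tt P}(\hat{y}^{(i)}_{\tt E}|\vx^{(i)};\vtheta^{(m)})\leq {\tt P}(\hat{y}^{(i)}_{m}|\vx^{(i)};\vtheta^{(m)})$, averaging over samples and members, and the two global top-label calibration identities converting average confidence into average accuracy for the ensemble and for each member. The only difference is presentational (you decompose the ensemble confidence first and invoke calibration last, whereas the paper states the calibration identities up front), so there is nothing substantive to change.
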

	\begin{proof}
		If all members and the ensemble are globally top-label calibrated, 
		\begin{eqnarray}
		\frac{1}{N}\sum_{i=1}^{N}{\tt P}(\hat{y}_m^{(i)}|{\bm x}^{(i)};{\bm \theta}^{(m)}) &=& \frac{1}{N}\sum_{i=1}^{N}\delta(y^{(i)}, \hat{y}^{(i)}_m), \:\:\:\: m=1,...,M  \label{eq:mem_def}\\
		\frac{1}{N}\sum_{i=1}^{N}\Bigg(\frac{1}{M}\sum_{m=1}^{M} {\tt P}(\hat{y}_{\tt E}^{(i)}|{\bm x}^{(i)};{\bm \theta}^{(m)})\Bigg) &=&
		\frac{1}{N}\sum_{i=1}^{N} \delta(y^{(i)},\hat{y}_{\tt E}^{(i)}) \label{eq:ensbl_def}
		\end{eqnarray}
		By definition,
		\begin{eqnarray}
		{\tt P}(\hat{y}_{\tt E}^{(i)}|{\bm x}^{(i)};{\bm \theta}^{(m)}) \leq {\tt P}(\hat{y}_m^{(i)}|{\bm x}^{(i)};{\bm \theta}^{(m)}) \label{eq:prob_ens_leq_mem}
		\end{eqnarray}
		Hence,
		\begin{eqnarray}
		\frac{1}{N}\sum_{i=1}^{N} \delta(y^{(i)}, \hat{y}^{(i)}_{\tt E}) &\leq& \frac{1}{M}\sum_{m=1}^{M} \Big(
		\frac{1}{N}\sum_{i=1}^{N} \delta(y^{(i)}, \hat{y}^{(i)}_{m})
		\Big) \label{eq:leq_acc}
		\end{eqnarray}
	\end{proof}
	However, this is not true for all-label calibration.  In both cases, all-label calibrated members always yield all-label calibrated ensemble, no matter whether the ensemble accuracy exceeds the mean accuracy of members or not (Example 2 in Appendix gives illustration on a synthetic dataset). 
	\begin{proposition}
		\label{prop:global_calibration}
		If all members are global all-label calibrated, 
		then the overall ensemble is global all-label calibrated.
	\end{proposition}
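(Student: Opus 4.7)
The plan is to give a short direct calculation that simply swaps the order of two finite sums, since the ensemble predictor is by construction a convex combination of member predictors and global all-label calibration is an identity that is linear in the predictor.

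First I would write out the definition of global all-label calibration for each ensemble member, namely
\begin{eqnarray*}
\frac{1}{N}\sum_{i=1}^{N} \ttP(\omega_j|\vx^{(i)};\vtheta^{(m)}) = \frac{1}{N}\sum_{i=1}^{N} \delta(y^{(i)},\omega_j), \quad \forall\, \omega_j,\ m=1,\dots,M,
\end{eqnarray*}
which is the $\epsilon=1$ specialization of the sample-based perfect calibration condition in Eq.~(\ref{eq:global_cal}). Next I would substitute the ensemble predictor $\ttP(\omega_j|\vx^{(i)};\vTheta) = \frac{1}{M}\sum_{m=1}^M \ttP(\omega_j|\vx^{(i)};\vtheta^{(m)})$ from Eq.~(\ref{eq:ensbl}) into the left-hand side of the calibration condition evaluated at $\vTheta$, then interchange the order of the sums over $i$ and $m$ (both finite), and apply the per-member calibration identity for each $m$.

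The chain of equalities I would display is
\begin{eqnarray*}
\frac{1}{N}\sum_{i=1}^{N} \ttP(\omega_j|\vx^{(i)};\vTheta)
&=& \frac{1}{N}\sum_{i=1}^{N} \frac{1}{M}\sum_{m=1}^{M}\ttP(\omega_j|\vx^{(i)};\vtheta^{(m)}) \\
&=& \frac{1}{M}\sum_{m=1}^{M} \left(\frac{1}{N}\sum_{i=1}^{N}\ttP(\omega_j|\vx^{(i)};\vtheta^{(m)})\right) \\
&=& \frac{1}{M}\sum_{m=1}^{M} \left(\frac{1}{N}\sum_{i=1}^{N}\delta(y^{(i)},\omega_j)\right) \\
&=& \frac{1}{N}\sum_{i=1}^{N}\delta(y^{(i)},\omega_j),
\end{eqnarray*}
which is exactly the definition of global all-label calibration applied to the ensemble $\vTheta$, and this holds for every class $\omega_j$.

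There is no real obstacle: unlike the top-label case, global all-label calibration is a linear constraint on the predictor, and averaging predictors preserves linear constraints, so the conclusion follows immediately. I would close the proof with a brief remark emphasising the contrast with Proposition~\ref{prop:global_top_label_calibration}: the top-label version involves the nonlinear operation $\arg\max$ (so the class index $\hat{y}_{\tt E}$ need not agree with the $\hat{y}_m$ whose probabilities are being averaged), whereas here the class $\omega_j$ is fixed across the inner sum, which is precisely what allows the swap of summations to go through.
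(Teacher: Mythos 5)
Your proof is correct and is essentially the same as the paper's: both substitute the ensemble average into the global calibration identity, swap the finite sums over $i$ and $m$, and apply the per-member condition. Your closing remark on linearity versus the nonlinearity of $\arg\max$ in the top-label case is a nice bonus but the core argument matches the paper exactly.
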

	\begin{proof}
		If all members are global all-label calibrated, then
		\begin{eqnarray}
		\frac{1}{N}\sum_{i=1}^{N}{\tt P}(\omega_j|{\bm x}^{(i)}; {\bm \theta}^{(m)}) &=& \frac{1}{N}\sum_{i=1}^{N} \delta(y^{(i)},\omega_j),\:\:\:\: \forall \omega_j,\:\:m=1,...,M
		\end{eqnarray}
		Hence,  
		\begin{eqnarray}
		\frac{1}{N}\sum_{i=1}^N {\tt P}(\omega_j|{\bm x}^{(i)};{\bm \Theta})
		%% = \frac{1}{N}\sum_{i=1}^N\Bigg(\frac{1}{M}\sum_{m=1}^M {\tt P}(\omega_j|{\bm %%x}^{(i)};{\bm \theta}^{(m)})\Bigg)
		%% \\
		=\frac{1}{M}\sum_{m=1}^M \Bigg(\frac{1}{N}\sum_{i=1}^N \delta(y^{(i)},\omega_j) \Bigg) = \frac{1}{N}\sum_{i=1}^N \delta(y^{(i)},\omega_j)
		\end{eqnarray}
	\end{proof}
	
	In general the regions are not the same, the ensemble accuracy is not bounded in the above way.  However, note that global level calibration is the minimum requirement of calibration.  The above discussion based on regions still sheds light on the question of should the members be calibrated or not, though the final theoretical answer is still absent.  It should be also noted that, global all-label calibration does not imply global top-label calibration, because the regions considered are different (as illustrated by Example 1 in Appendix).

	For the discussion so far, the ensemble members are combined with uniform weights, motivated from a Bayesian approximation perspective.  When, for example, multiple different topologies are used as members of the ensemble, a non-uniform averaging of the members of the ensemble, reflecting  the model complexities and performance may be useful.  Propositions~\ref{prop:global_top_label_calibration} and ~\ref{prop:global_calibration} will still apply.  
	
	\subsection{Temperature Annealing for Ensemble Calibration}
	Calibrating ensembles can be performing using a function $f\in\gF$ with some parameters, ${\bm t}$, $\gF:[0,1]\rightarrow [0,1]$ for scaling probabilities.
	There are two modes for calibrating an ensemble:\\
	\textbf{Pre-combination Mode.} the function is applied to the probabilities predicted by members, prior to combining the members to obtain ensemble prediction using a set of calibration parameters ${\bm T}$.
	\begin{eqnarray}
	{\tt P}_{\tt pre}(\hat{y}_{\tt E}|{\bm x};{\bm \Theta},{\bm T})= \frac{1}{M}\sum_{m=1}^{M} f\Big( {\tt P}(\hat{y}_{\tt E}|{\bm x};{\bm \theta}^{(m)}),{\bm t}^{(m)}\Big) \label{eq:bef_cal}
	\end{eqnarray}
	\\
	\textbf{Post-combination Mode.} the function is applied to the ensemble predicted probability after combining members' predictions.
	\begin{eqnarray}
	{\tt P}_{\tt post}(\hat{y}_{\tt E}|{\bm x};{\bm \Theta},{\bm t})= f\Bigg(\left(\frac{1}{M}\sum_{m=1}^{M}  {\tt P}(\hat{y}_{\tt E}|{\bm x};{\bm \theta}^{(m)})\right),{\bm t}\Bigg) \label{eq:aft_cal}
	\end{eqnarray}
	
	There are many functions for transforming predicted probability in the calibration literature, e.g. histogram binning, Platt scaling and temperature annealing.  However, histogram binning shouldn't be adopted in the pre-combination mode as scaling function $f$ for calibrating multi-class ensemble, as the transformed values may not yield a valid PMF. 
	
	As shown in \citet{guo2017calibration}, temperature scaling is a simple, effective, option for the mapping function $\gF$, which scales the logit values associated with the posterior by a temperature $t$, $f(\vz; t) = {\exp\{\vz/t\}}/{\sum_j\exp\{z_j/t\}}$.
	Here a single temperature is used for scaling logits for all samples.  This leads to the problem that the entropy of the predictions for all regions are either increased or decreased. From Eq.~(\ref{eq:cal_def}) the temperature can be made region specific.
	\begin{eqnarray}
	f_{\tt dyn}(\vz; \vt) = \frac{\exp\{\vz/t_r\}}{\sum_j\exp\{z_j/t_r\}},  \:\:\:\: \text{if}\:\: \max_i \frac{\exp\{z_i\}}{\sum_j\exp\{z_j\}} \in \gR_r
	\end{eqnarray}
	To determine the optimal set of temperatures, the samples in the validation set are divided into $R$ regions based on the ensemble predictions (e.g. $\gR_1=[0, 0.3)$, $\gR_2=[0.3, 0.6)$, and $\gR_3=[0.6, 1]$).  Each region has an 
	individual temperature for scaling $\{\gR_r, t_r\}_{r=1}^{R}$. 
	
	\subsection{Empirical Results}
	\label{sec:empirical_results}
	Experiments were conducted on CIFAR-100 (and CIFAR-10 in the \ref{app:additional_results}).  The data partition was 45,000/5,000/10,000 images for train/validation/test.  %The validation set is used to estimate the calibration temperatures and topology ensemble combination weights.  
	We train LeNet (LEN) \citep{lecun1998gradient}, DenseNet 100 and 121 (DSN100, DSN121),  \citep{huang2017densely} and Wide ResNet 28 (RSN) \citep{zagoruyko2016wide} following the original training recipes in each paper (more details in \ref{app:additional_results}).
	%and public code
	%\footnote{\url{https://github.com/anonymous}}). 
	The results presented are slightly lower than that in the original papers, as 5,000 images were held-out to enable calibration parameter optimisation.

	\begin{figure}[htb!]
		\centering
		\begin{subfigure}{.3\linewidth}
			\centering
			\caption{LEN}
			\includegraphics[width=\linewidth]{./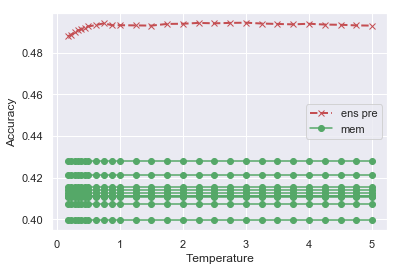}
		\end{subfigure}
		\begin{subfigure}{.3\textwidth}
			\centering
			\caption{DSN100}
			\includegraphics[width=\linewidth]{./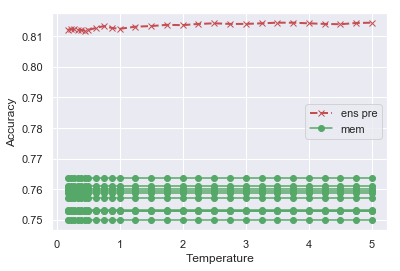}
		\end{subfigure}
		\begin{subfigure}{.3\textwidth}
			\centering
			\caption{RSN}
			\includegraphics[width=\linewidth]{./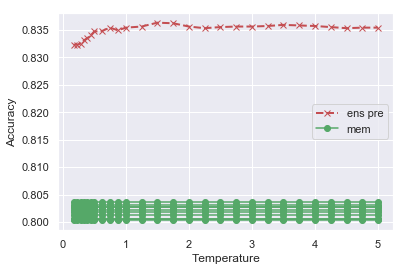}
		\end{subfigure}\\
		\begin{subfigure}{.3\textwidth}
			\centering
			\includegraphics[width=\linewidth]{./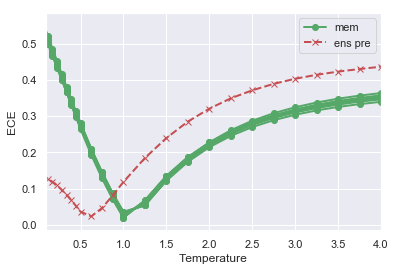}
		\end{subfigure}
		\begin{subfigure}{.3\textwidth}
			\centering
			\includegraphics[width=\linewidth]{./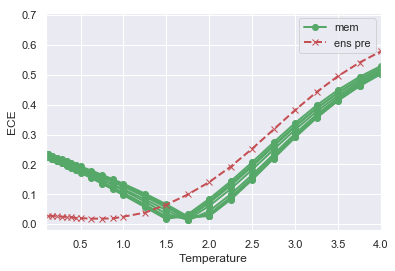}
		\end{subfigure}
		\begin{subfigure}{.3\textwidth}
			\centering
			\includegraphics[width=\linewidth]{./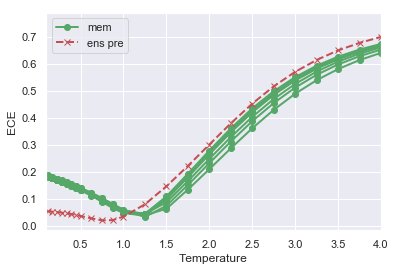}
		\end{subfigure}\\
		\begin{subfigure}{.3\textwidth}
			\centering
			\includegraphics[width=\linewidth]{./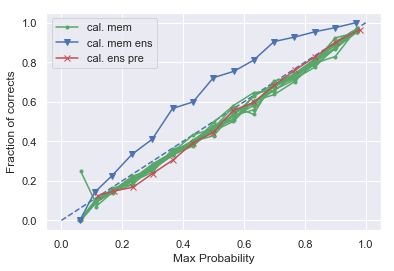}
		\end{subfigure}
		\begin{subfigure}{.3\textwidth}
			\centering
			\includegraphics[width=\linewidth]{./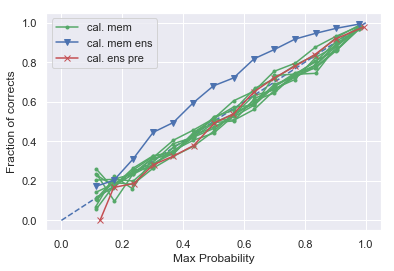}
		\end{subfigure}
		\begin{subfigure}{.3\textwidth}
			\centering
			\includegraphics[width=\linewidth]{./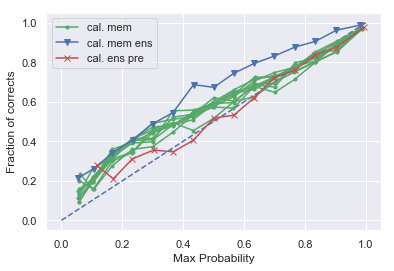}
		\end{subfigure} 
		\caption{Top-label calibration error and accuracy of members (mem) and the whole ensemble (ens) on CIFAR-100 (test set) using LeNet, DenseNet and ResNet.  ``pre" denotes the calibration where shared temperature is applied to members before combination. The reliability curves shows the calibrated members and calibrated ensembles with optimal temperature values.}
		\label{fig:cifar100_tce_temp}
	\end{figure}
	
	Figure~\ref{fig:cifar100_tce_temp} examines the empirical performance of ensemble calibration  on CIFAR-100 test set using the three trained networks. The top row shows that, with appropriate temperature scaling, the members are calibrated on different regions (because otherwise the accuracy values should be the same).
	The middle row shows the ECE of ensemble members and ensemble prediction at different temperatures.  The optimal calibration temperature for the ensemble prediction are consistently smaller than those associated with the ensemble members. This indicates that the ensemble predictions are less confident than those of the members, as stated in Eq.~(\ref{eq:prob_ens_leq_mem}).
	The bottom row of figures show the reliability curves when the ensemble members are calibrated with optimal temperature values,
	and the resulting combination. It is clear that calibrating the ensemble members, using temperature, does not yield a calibrated ensemble prediction. Furthermore for all models the ensemble prediction is less confident than it should be, the line is above the diagonal. As discussed in Proposition~\ref{prop:global_top_label_calibration}, this is necessary, or the ensemble prediction is no better, which is clearly not the case for the performance plots in the top row. This ensemble performance is relatively robust to poorly calibrated ensemble members, with consistent performance over a wide range of temperatures.
	
	Table~\ref{tab:ens_calibration} shows the calibration performance using three temperature scaling methods, pre-, post- and dynamic post-combination.  The temperatures are optimized to minimize ECE \citep{liang2020neural} on the validation data. We use the unbiased quadratic version of squared kernel calibration error (SKCE)  with Laplacian kernel and kenel bandwidth chosen by median heuristic as one of the calibration error metrics\citep{widmann2019calibration} .
	All three methods effectively improve the ensemble prediction calibration, with the dynamic approach yielding the best performance. We further investigate the impact of region numbers on the dynamic approach, as shown in Figure~\ref{fig:cifar100_region_num}.  It can be found that increasing the region number tends to improve the calibration performance, while requiring more parameters.

	Finally, for the topology ensemble, weights were optimised using either maximum likelihood (Max LL) or area under curve (AUC) \citet{zhong2013accurate} (results in \ref{app:additional_results}). 
	In Figure \ref{fig:cifar100_reliability_curve_structure_comb}, the ensemble of calibrated structures is shown to be uncalibrated, with reliability curves typically slightly above the diagonal line. When the ensemble prediction is calibrated it
	can be seen that the calibration for the ensemble prediction is lower than the individual calibration errors in Table~\ref{tab:ens_calibration} (``post" lines). 
	\begin{figure}[htb]
		\centering
		\begin{subfigure}{.3\textwidth}
			\centering
			\includegraphics[width=\linewidth]{./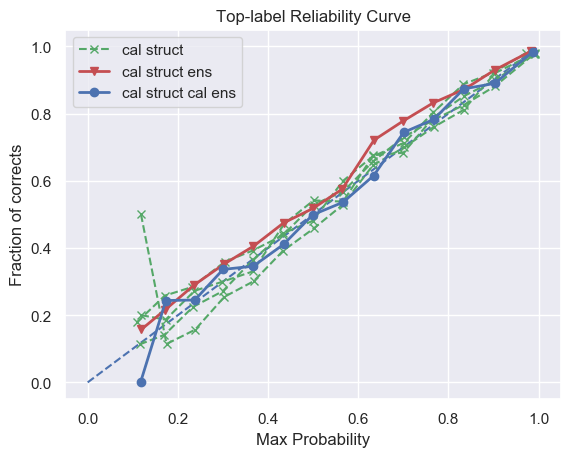}
			\caption*{LEN+DSN100+DSN121+RSN}
		\end{subfigure}
		\begin{subfigure}{.3\textwidth}
			\centering
			\includegraphics[width=\linewidth]{./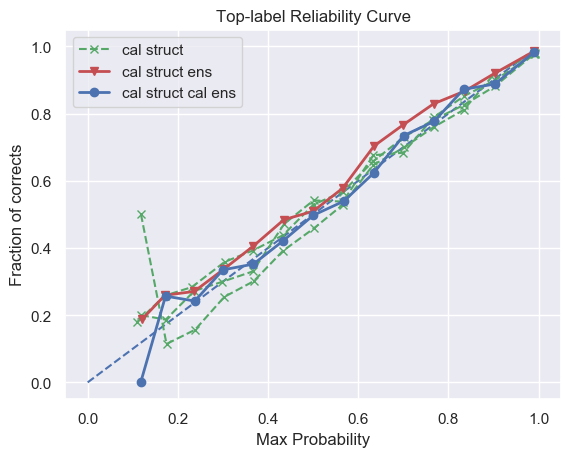}
			\caption*{DSN100+DSN121+RSN}
		\end{subfigure}
		\begin{subfigure}{.3\textwidth}
			\centering
			\includegraphics[width=\linewidth]{./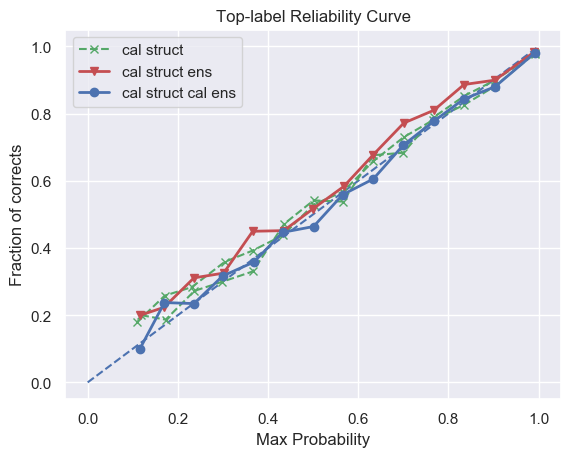}
			\caption*{DSN121+RSN}
		\end{subfigure}
		\caption{Reliability curves of weighted combination of 4 calibrated structures, LEN, DSN100, DSN121 and RSN on CIFAR-100.  The weightes are estimated by Max LL.  Each structure is an ensemble of 10 models.}
		\label{fig:cifar100_reliability_curve_structure_comb}
	\end{figure}
	\begin{table}[htb]
		\caption{Temperature calibration techniques on CIFAR-100, calibration parameters optimized to minimize ECE on validation set.  In the ``pre'' mode, each member is scaled with one separate temperature. ``dyn.'' denotes dynamic temperature scaling in post-combination mode using 6 region-based temperatures. Ranges indicate $\pm2\sigma$.}
		\tiny
		\centering
		\begin{tabular}{cc||cccccccccccc}
			\hline
			\multirow{1}{*}{\textbf{Model}}& \multirow{1}{*}{\textbf{Cal.}}& \textbf{Acc.}(\%) &  \textbf{NLL} & \textbf{ACCE}(10$^{-4}$) &\textbf{ACE}(10$^{-4}$) & \textbf{ECCE}(10$^{-2}$) & \textbf{ECE}(10$^{-2}$) & \textbf{SKCE} (10$^{-4}$) \\
			\hline
			\multirow{4}{*}{LEN} 
			& --- & 49.20 & 1.9741$\pm$0.0059 & 30.82$\pm$0.44 & 23.66$\pm$0.55 & 16.23$\pm$0.39 & 11.55$\pm$0.39 & \textbf{23.97$\pm$0.06}\\
			& pre & 49.17 & 1.9641$\pm$0.0137 & 23.15$\pm$0.86 & 8.54$\pm$1.85 & 13.23$\pm$0.19 & 3.24$\pm$0.37 & 27.24$\pm$0.42\\
			& post & 49.20 & \textbf{1.9285$\pm$0.0068} & 21.72$\pm$0.61 & 5.73$\pm$1.24 & 13.22$\pm$0.23 & \textbf{2.19$\pm$0.45} & 28.41$\pm$0.44\\
			& dyn. & 49.20 & \textbf{1.9280$\pm$0.0107} & \textbf{21.19$\pm$0.73} & \textbf{4.45$\pm$1.66} & \textbf{12.86$\pm$0.28} & 2.33$\pm$1.05 & 28.81$\pm$0.30\\
			\hline
			\multirow{4}{*}{DSN 100} 
			& --- & 81.32 & 0.6699$\pm$0.0015 & 16.31$\pm$0.42 & 5.79$\pm$0.38 & 8.92$\pm$0.39 & 2.54$\pm$0.19 & \textbf{53.71$\pm$0.14}\\
			& pre & 81.29 & 0.6912$\pm$0.0084 & 16.89$\pm$0.36 & 6.86$\pm$0.59 & 8.79$\pm$0.30 & 2.08$\pm$0.38 & 55.32$\pm$0.47\\
			& post & 81.32 & 0.6852$\pm$0.0080 & 16.73$\pm$0.38 & 6.29$\pm$0.68 & 8.56$\pm$0.25 & 1.83$\pm$0.32 & 57.64$\pm$1.25\\
			& dyn.& 81.32 & \textbf{0.6781$\pm$0.0058} & \textbf{16.11$\pm$0.60} & \textbf{4.94$\pm$1.03} & \textbf{8.41$\pm$0.35} & \textbf{1.31$\pm$0.53} & 57.17$\pm$0.69\\
			\hline
			\multirow{4}{*}{DSN 121} 
			& ---&82.69 & \textbf{0.6314$\pm$0.0022} & 15.74$\pm$0.24 & 3.64$\pm$0.35 & \textbf{8.58$\pm$0.18} & \textbf{1.58$\pm$0.24} & 59.30$\pm$0.07 \\
			& pre& 82.70 & \textbf{0.6312$\pm$0.0056} & 15.79$\pm$0.38 & 3.58$\pm$0.80 & \textbf{8.58$\pm$0.18} & 1.62$\pm$0.20 & 59.21$\pm$0.82\\ 
			& post& 82.69 & 0.6324$\pm$0.0044 & 15.81$\pm$0.43 & 3.76$\pm$0.65 & \textbf{8.57$\pm$0.17} & \textbf{1.56$\pm$0.23} & 59.61$\pm$1.30\\
			& dyn.& 82.69 & \textbf{0.6315$\pm$0.0041} & \textbf{15.63$\pm$0.43} & \textbf{3.26$\pm$0.34} & 8.65$\pm$0.29 & 1.71$\pm$0.18 & \textbf{57.87$\pm$0.53}\\
			\hline
			\multirow{4}{*}{RSN} 
			& --- & 83.45 & 0.6231$\pm$0.0023 & 16.95$\pm$0.20 & 7.31$\pm$0.26 & 9.28$\pm$0.26 & 3.22$\pm$0.19 & \textbf{57.01$\pm$0.14}\\
			& pre & 83.41 & 0.6129$\pm$0.0018 & \textbf{15.41$\pm$0.43} & \textbf{2.52$\pm$0.63} & 8.75$\pm$0.17 & 1.88$\pm$0.26 & 60.67$\pm$0.70\\
			& post & 83.45 & 0.6118$\pm$0.0016 & \textbf{15.48$\pm$0.28 }& 3.28$\pm$0.52 & 8.75$\pm$0.15 & 1.82$\pm$0.20 & 60.75$\pm$0.56\\
			& dyn. & 83.45 & \textbf{0.6097$\pm$0.0023} & 15.63$\pm$0.31 & 2.83$\pm$0.56 & \textbf{8.68$\pm$0.31} & \textbf{1.20$\pm$0.41} & 59.36$\pm$0.74\\
			\hline
		\end{tabular}
		\label{tab:ens_calibration}
	\end{table}
	
	\begin{figure}[htb]
		\centering
		\begin{subfigure}{.3\textwidth}
			\centering
			\includegraphics[width=\linewidth]{./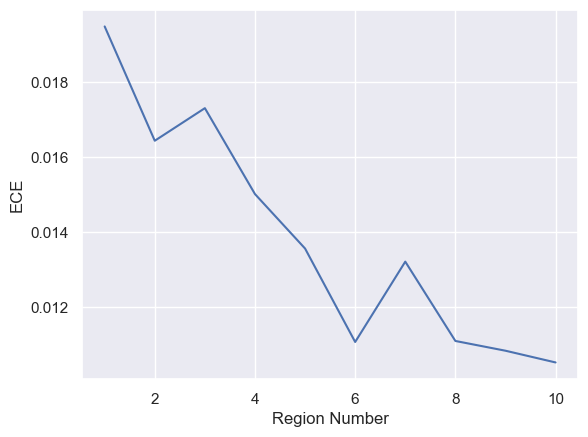}
			\caption*{DSN100}
		\end{subfigure}
		\begin{subfigure}{.3\textwidth}
			\centering
			\includegraphics[width=\linewidth]{./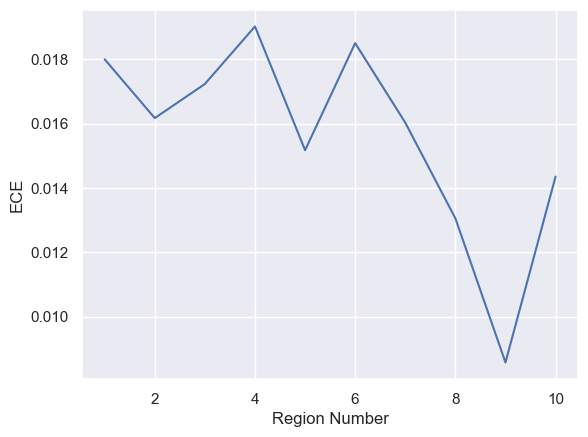}
			\caption*{DSN121}
		\end{subfigure}
		\begin{subfigure}{.3\textwidth}
			\centering
			\includegraphics[width=\linewidth]{./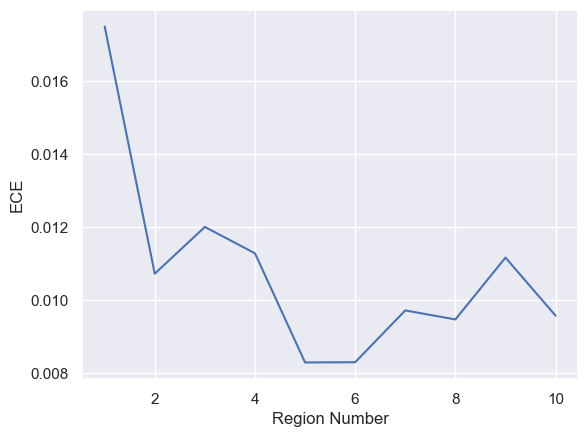}
			\caption*{RSN}
		\end{subfigure}
		\caption{Impact of different region numbers on dynamic temperature annealing in calibration of ensembles of DSN100, DSN121 and RSN. }
		\label{fig:cifar100_region_num}
	\end{figure}
	\begin{table}[htb!]
		\caption{Topology ensembles for CIFAR-100, optimal weights using ML estimation.  Calibrations of each topology and ensemble using post-combination mode (``post" in Table~\ref{tab:ens_calibration}).}
		\small
		\centering
		\begin{tabular}{ccccc|cccccc}
			\hline
			\textbf{Weight Est.} & \multicolumn{4}{c|}{\textbf{Comb. Weight}} & \textbf{Acc.}& \textbf{Ens Cal.}  & \textbf{NLL} & \textbf{ACE}& \textbf{ECE}  \\
			&LEN & DSN100 & DSN121 & RSN &(\%)  & & & (10$^{-4}$)&  (10$^{-2}$)& \\ 
			\hline
			\multirow{6}{*}{Max LL}&\multirow{2}*{0.02} & \multirow{2}*{0.19} & \multirow{2}*{0.30} & \multirow{2}*{0.49}& \multirow{2}*{83.75}  & ---  & 0.5766 & 4.97 & 2.24\\
			&&  && &  & \checkmark & 0.5698 & 1.42 & 1.20 \\
			%\cline{2-10}
			&\multirow{2}*{---} & \multirow{2}*{0.22} & \multirow{2}*{0.30} & \multirow{2}*{0.48}  & \multirow{2}*{83.80} & --- & 0.5741 & 3.74 & 2.00 \\
			& && && & \checkmark  & 0.5714 & 1.52 & 1.29 \\
			%\cline{2-10}
			&\multirow{2}*{---} & \multirow{2}*{---} & \multirow{2}*{0.44}& \multirow{2}*{0.56} & \multirow{2}*{83.86} & --- & 0.5816 & 3.64 & 2.06\\
			&& & &  &  & \checkmark & 0.5801 & 2.36 & 1.35\\
			\hline
		\end{tabular}
		\label{tab:structure_comb}
	\end{table}
	
	\section{Conclusions}
	State-of-the-art deep learning models often exhibit poor calibration performance. In this paper two aspects of calibration for these models are investigated: the theoretical definition of calibration and associated attributes for both general and top-label calibration; and the application of calibration to ensemble methods that are often used in deep-learning approaches for improved performance and uncertainty estimation. It is shown that calibrating members of the ensemble is not sufficient to ensure that the ensemble prediction is itself calibrated. The resulting ensemble predictions will be under-confident, requiring calibration functions to be optimised for the ensemble prediction, rather than ensemble members. These theoretical results are backed-up by empirical analysis on CIFAR-100 deep-learning models, with ensemble performance being robust to poorly calibrated ensemble members but requiring
	calibration even with well calibrated members.

	\bibliography{bibfile}

\begin{thebibliography}{35}
\providecommand{\natexlab}[1]{#1}
\providecommand{\url}[1]{\texttt{#1}}
\expandafter\ifx\csname urlstyle\endcsname\relax
  \providecommand{\doi}[1]{doi: #1}\else
  \providecommand{\doi}{doi: \begingroup \urlstyle{rm}\Url}\fi

\bibitem[Ashukha et~al.(2020)Ashukha, Lyzhov, Molchanov, and
  Vetrov]{ashukha2020pitfalls}
Arsenii Ashukha, Alexander Lyzhov, Dmitry Molchanov, and Dmitry Vetrov.
\newblock Pitfalls of in-domain uncertainty estimation and ensembling in deep
  learning.
\newblock \emph{arXiv preprint arXiv:2002.06470}, 2020.

\bibitem[Bojarski et~al.(2016)Bojarski, Del~Testa, Dworakowski, Firner, Flepp,
  Goyal, Jackel, Monfort, Muller, Zhang, et~al.]{bojarski2016end}
Mariusz Bojarski, Davide Del~Testa, Daniel Dworakowski, Bernhard Firner, Beat
  Flepp, Prasoon Goyal, Lawrence~D Jackel, Mathew Monfort, Urs Muller, Jiakai
  Zhang, et~al.
\newblock End to end learning for self-driving cars.
\newblock \emph{arXiv preprint arXiv:1604.07316}, 2016.

\bibitem[Br{\"o}cker(2012)]{brocker2012estimating}
Jochen Br{\"o}cker.
\newblock Estimating reliability and resolution of probability forecasts
  through decomposition of the empirical score.
\newblock \emph{Climate dynamics}, 39\penalty0 (3-4):\penalty0 655--667, 2012.

\bibitem[Gneiting et~al.(2007)Gneiting, Balabdaoui, and
  Raftery]{gneiting2007probabilistic}
Tilmann Gneiting, Fadoua Balabdaoui, and Adrian~E Raftery.
\newblock Probabilistic forecasts, calibration and sharpness.
\newblock \emph{Journal of the Royal Statistical Society: Series B (Statistical
  Methodology)}, 69\penalty0 (2):\penalty0 243--268, 2007.

\bibitem[Guo et~al.(2017)Guo, Pleiss, Sun, and Weinberger]{guo2017calibration}
Chuan Guo, Geoff Pleiss, Yu~Sun, and Kilian~Q Weinberger.
\newblock On calibration of modern neural networks.
\newblock \emph{ICML}, 2017.

\bibitem[Hendrycks \& Dietterich(2019)Hendrycks and
  Dietterich]{hendrycks2019benchmarking}
Dan Hendrycks and Thomas Dietterich.
\newblock Benchmarking neural network robustness to common corruptions and
  perturbations.
\newblock \emph{ICLR}, 2019.

\bibitem[Hendrycks et~al.(2019)Hendrycks, Mazeika, and
  Dietterich]{hendrycks2018deep}
Dan Hendrycks, Mantas Mazeika, and Thomas Dietterich.
\newblock Deep anomaly detection with outlier exposure.
\newblock \emph{ICLR}, 2019.

\bibitem[Huang et~al.(2017)Huang, Liu, Van Der~Maaten, and
  Weinberger]{huang2017densely}
Gao Huang, Zhuang Liu, Laurens Van Der~Maaten, and Kilian~Q Weinberger.
\newblock Densely connected convolutional networks.
\newblock In \emph{Proceedings of the IEEE conference on computer vision and
  pattern recognition}, pp.\  4700--4708, 2017.

\bibitem[Jiang et~al.(2012)Jiang, Osl, Kim, and
  Ohno-Machado]{jiang2012calibrating}
Xiaoqian Jiang, Melanie Osl, Jihoon Kim, and Lucila Ohno-Machado.
\newblock Calibrating predictive model estimates to support personalized
  medicine.
\newblock \emph{Journal of the American Medical Informatics Association},
  19\penalty0 (2):\penalty0 263--274, 2012.

\bibitem[Kuleshov \& Liang(2015)Kuleshov and Liang]{kuleshov2015calibrated}
Volodymyr Kuleshov and Percy~S Liang.
\newblock Calibrated structured prediction.
\newblock In \emph{Advances in Neural Information Processing Systems}, pp.\
  3474--3482, 2015.

\bibitem[Kull et~al.(2017)Kull, Silva~Filho, and Flach]{kull2017beta}
Meelis Kull, Telmo Silva~Filho, and Peter Flach.
\newblock Beta calibration: a well-founded and easily implemented improvement
  on logistic calibration for binary classifiers.
\newblock In \emph{Artificial Intelligence and Statistics}, pp.\  623--631,
  2017.

\bibitem[Kull et~al.(2019)Kull, Nieto, K{\"a}ngsepp, Silva~Filho, Song, and
  Flach]{kull2019beyond}
Meelis Kull, Miquel~Perello Nieto, Markus K{\"a}ngsepp, Telmo Silva~Filho, Hao
  Song, and Peter Flach.
\newblock Beyond temperature scaling: Obtaining well-calibrated multi-class
  probabilities with dirichlet calibration.
\newblock In \emph{Advances in Neural Information Processing Systems}, pp.\
  12316--12326, 2019.

\bibitem[Kumar et~al.(2019)Kumar, Liang, and Ma]{kumar2019verified}
Ananya Kumar, Percy~S Liang, and Tengyu Ma.
\newblock Verified uncertainty calibration.
\newblock In \emph{Advances in Neural Information Processing Systems}, pp.\
  3792--3803, 2019.

\bibitem[Kumar et~al.(2018)Kumar, Sarawagi, and Jain]{kumar2018trainable}
Aviral Kumar, Sunita Sarawagi, and Ujjwal Jain.
\newblock Trainable calibration measures for neural networks from kernel mean
  embeddings.
\newblock In \emph{International Conference on Machine Learning}, pp.\
  2805--2814, 2018.

\bibitem[Lakshminarayanan et~al.(2017)Lakshminarayanan, Pritzel, and
  Blundell]{lakshminarayanan2017simple}
Balaji Lakshminarayanan, Alexander Pritzel, and Charles Blundell.
\newblock Simple and scalable predictive uncertainty estimation using deep
  ensembles.
\newblock In \emph{Advances in neural information processing systems}, pp.\
  6402--6413, 2017.

\bibitem[LeCun et~al.(1998)LeCun, Bottou, Bengio, and
  Haffner]{lecun1998gradient}
Yann LeCun, L{\'e}on Bottou, Yoshua Bengio, and Patrick Haffner.
\newblock Gradient-based learning applied to document recognition.
\newblock \emph{Proceedings of the IEEE}, 86\penalty0 (11):\penalty0
  2278--2324, 1998.

\bibitem[Liang et~al.(2020)Liang, Zhang, and Jacobs]{liang2020neural}
Gongbo Liang, Yu~Zhang, and Nathan Jacobs.
\newblock Neural network calibration for medical imaging classification using
  dca regularization.
\newblock In \emph{ICML UDL}, 2020.

\bibitem[Naeini et~al.(2015)Naeini, Cooper, and
  Hauskrecht]{naeini2015obtaining}
Mahdi~Pakdaman Naeini, Gregory~F Cooper, and Milos Hauskrecht.
\newblock Obtaining well calibrated probabilities using bayesian binning.
\newblock In \emph{Proceedings of the... AAAI Conference on Artificial
  Intelligence. AAAI Conference on Artificial Intelligence}, volume 2015, pp.\
  2901. NIH Public Access, 2015.

\bibitem[Nguyen \& O'Connor(2015)Nguyen and O'Connor]{nguyen2015posterior}
Khanh Nguyen and Brendan O'Connor.
\newblock Posterior calibration and exploratory analysis for natural language
  processing models.
\newblock \emph{EMNLP}, 2015.

\bibitem[Niculescu-Mizil \& Caruana(2005)Niculescu-Mizil and
  Caruana]{niculescu2005predicting}
Alexandru Niculescu-Mizil and Rich Caruana.
\newblock Predicting good probabilities with supervised learning.
\newblock In \emph{Proceedings of the 22nd international conference on Machine
  learning}, pp.\  625--632, 2005.

\bibitem[Nixon et~al.(2019)Nixon, Dusenberry, Zhang, Jerfel, and
  Tran]{nixon2019measuring}
Jeremy Nixon, Michael~W Dusenberry, Linchuan Zhang, Ghassen Jerfel, and Dustin
  Tran.
\newblock Measuring calibration in deep learning.
\newblock In \emph{CVPR Workshops}, pp.\  38--41, 2019.

\bibitem[Platt et~al.(1999)]{platt1999probabilistic}
John Platt et~al.
\newblock Probabilistic outputs for support vector machines and comparisons to
  regularized likelihood methods.
\newblock \emph{Advances in large margin classifiers}, 10\penalty0
  (3):\penalty0 61--74, 1999.

\bibitem[Raftery et~al.(2005)Raftery, Gneiting, Balabdaoui, and
  Polakowski]{raftery2005using}
Adrian~E Raftery, Tilmann Gneiting, Fadoua Balabdaoui, and Michael Polakowski.
\newblock Using bayesian model averaging to calibrate forecast ensembles.
\newblock \emph{Monthly weather review}, 133\penalty0 (5):\penalty0 1155--1174,
  2005.

\bibitem[Rahaman \& Thiery(2020)Rahaman and Thiery]{rahaman2020uncertainty}
Rahul Rahaman and Alexandre~H Thiery.
\newblock Uncertainty quantification and deep ensembles.
\newblock \emph{arXiv preprint arXiv:2007.08792}, 2020.

\bibitem[Stickland \& Murray(2020)Stickland and Murray]{stickland2020diverse}
Asa~Cooper Stickland and Iain Murray.
\newblock Diverse ensembles improve calibration.
\newblock \emph{ICML 2020 workshop on Uncertainty and Robustness in Deep
  Learning}, 2020.

\bibitem[Stutz et~al.(2020)Stutz, Hein, and Schiele]{stutz2019confidence}
David Stutz, Matthias Hein, and Bernt Schiele.
\newblock Confidence-calibrated adversarial training: Generalizing to unseen
  attacks.
\newblock \emph{ICML 2020 workshop on Uncertainty and Robustness in Deep
  Learning}, 2020.

\bibitem[Tran et~al.(2019)Tran, Bonilla, Cunningham, Michiardi, and
  Filippone]{tran2019calibrating}
Gia-Lac Tran, Edwin~V Bonilla, John Cunningham, Pietro Michiardi, and Maurizio
  Filippone.
\newblock Calibrating deep convolutional gaussian processes.
\newblock In \emph{The 22nd International Conference on Artificial Intelligence
  and Statistics}, pp.\  1554--1563. PMLR, 2019.

\bibitem[Vaicenavicius et~al.(2019)Vaicenavicius, Widmann, Andersson, Lindsten,
  Roll, and Sch{\"o}n]{vaicenavicius2019evaluating}
Juozas Vaicenavicius, David Widmann, Carl Andersson, Fredrik Lindsten, Jacob
  Roll, and Thomas~B Sch{\"o}n.
\newblock Evaluating model calibration in classification.
\newblock \emph{Proceedings of Machine Learning Research}, 2019.

\bibitem[Wen et~al.(2020)Wen, Jerfel, Muller, Dusenberry, Snoek,
  Lakshminarayanan, and Tran]{wen2020improving}
Yeming Wen, Ghassen Jerfel, Rafael Muller, Michael~W Dusenberry, Jasper Snoek,
  Balaji Lakshminarayanan, and Dustin Tran.
\newblock Improving calibration of batchensemble with data augmentation.
\newblock \emph{ICML 2020 workshop on Uncertainty and Robustness in Deep
  Learning}, 2020.

\bibitem[Widmann et~al.(2019)Widmann, Lindsten, and
  Zachariah]{widmann2019calibration}
David Widmann, Fredrik Lindsten, and Dave Zachariah.
\newblock Calibration tests in multi-class classification: A unifying
  framework.
\newblock In \emph{Advances in Neural Information Processing Systems}, pp.\
  12257--12267, 2019.

\bibitem[Zadrozny \& Elkan(2001)Zadrozny and Elkan]{zadrozny2001obtaining}
Bianca Zadrozny and Charles Elkan.
\newblock Obtaining calibrated probability estimates from decision trees and
  naive bayesian classifiers.
\newblock In \emph{Icml}, volume~1, pp.\  609--616. Citeseer, 2001.

\bibitem[Zadrozny \& Elkan(2002)Zadrozny and Elkan]{zadrozny2002transforming}
Bianca Zadrozny and Charles Elkan.
\newblock Transforming classifier scores into accurate multiclass probability
  estimates.
\newblock In \emph{Proceedings of the eighth ACM SIGKDD international
  conference on Knowledge discovery and data mining}, pp.\  694--699, 2002.

\bibitem[Zagoruyko \& Komodakis(2016)Zagoruyko and
  Komodakis]{zagoruyko2016wide}
Sergey Zagoruyko and Nikos Komodakis.
\newblock Wide residual networks.
\newblock \emph{arXiv preprint arXiv:1605.07146}, 2016.

\bibitem[Zhang et~al.(2018)Zhang, Cisse, Dauphin, and
  Lopez-Paz]{zhang2017mixup}
Hongyi Zhang, Moustapha Cisse, Yann~N Dauphin, and David Lopez-Paz.
\newblock mixup: Beyond empirical risk minimization.
\newblock \emph{ICLR}, 2018.

\bibitem[Zhong \& Kwok(2013)Zhong and Kwok]{zhong2013accurate}
Wenliang Zhong and James~T Kwok.
\newblock Accurate probability calibration for multiple classifiers.
\newblock In \emph{Twenty-Third International Joint Conference on Artificial
  Intelligence}. Citeseer, 2013.

\end{thebibliography}
	\bibliographystyle{bibsty}
	
	\appendix
	\section{Appendix}
	\subsection{Theoretical Proof}
	\label{app:theoretical_proof}
	\begin{proposition}
		\label{prop:calibration}
		If all members are calibrated and the regions are the same, i.e., for different members $\vtheta^{(m)}$ and $\vtheta^{(m')}$
		\[\mathcal{R}_j^p({\bm \theta}^{(m)}, \epsilon) = \mathcal{R}_j^p({\bm \theta}^{(m')}, \epsilon) \:\:\:\:\forall p,\omega_j,\:\:\:\epsilon\rightarrow0 \]
		then the ensemble is also calibrated on the same regions
		\begin{eqnarray}
		\int_{{\bm x}\in \mathcal{R}_j^p({\bm \Theta}, \epsilon)} {\tt P}(\omega_j|{\bm x};{\bm \Theta}){\tt p}({\bm x})\mathrm{d}{\bm x} 
		&=& \int_{{\bm x}\in \mathcal{R}_j^p({\bm \Theta}, \epsilon)} {\tt P}(\omega_j|{\bm x}){\tt p}({\bm x})\mathrm{d}{\bm x}, \:\:\:\:\forall p,\omega_j,\:\:\:\epsilon\rightarrow0 \nonumber
		\end{eqnarray}
	\end{proposition}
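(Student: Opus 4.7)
The plan is to proceed in three steps: (i) show that under the stated hypothesis the ensemble region coincides (in the $\epsilon\to0$ limit) with the common member region, (ii) expand the ensemble posterior by linearity of integration against $\ttp(\vx)$, and (iii) apply the per-member calibration identity termwise and average.

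First I would observe that for any $\vx$ in the common region $\mathcal{R}_j^p(\vtheta^{(m)},\epsilon)$, we have $|\ttP(\omega_j|\vx;\vtheta^{(m)})-p|\le\epsilon$ for every $m$, so by the triangle inequality the average satisfies $|\ttP(\omega_j|\vx;\vTheta)-p|\le \epsilon$ as well. Hence $\bigcap_m \mathcal{R}_j^p(\vtheta^{(m)},\epsilon) \subseteq \mathcal{R}_j^p(\vTheta,\epsilon)$; under the assumption that all member regions agree, the left-hand side is simply $\mathcal{R}_j^p(\vtheta^{(m)},\epsilon)$ for any $m$. In the $\epsilon\to0$ limit, all predicted probabilities are pinned exactly to $p$, so $\mathcal{R}_j^p(\vTheta,\epsilon)$ reduces to the same set, and we may identify the integration domain on both sides.

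Next, using $\ttP(\omega_j|\vx;\vTheta)=\tfrac{1}{M}\sum_{m=1}^M\ttP(\omega_j|\vx;\vtheta^{(m)})$ from Eq.~(\ref{eq:ensbl}) and linearity,
\begin{equation*}
\int_{\vx\in\mathcal{R}_j^p(\vTheta,\epsilon)}\ttP(\omega_j|\vx;\vTheta)\,\ttp(\vx)\,\mathrm{d}\vx
=\frac{1}{M}\sum_{m=1}^{M}\int_{\vx\in\mathcal{R}_j^p(\vtheta^{(m)},\epsilon)}\ttP(\omega_j|\vx;\vtheta^{(m)})\,\ttp(\vx)\,\mathrm{d}\vx,
\end{equation*}
where in each summand I have used that $\mathcal{R}_j^p(\vTheta,\epsilon)=\mathcal{R}_j^p(\vtheta^{(m)},\epsilon)$ from step (i). Then I invoke the definition of member calibration in Eq.~(\ref{eq:cal_def}) on each term to replace $\ttP(\omega_j|\vx;\vtheta^{(m)})$ by $\ttP(\omega_j|\vx)$ inside the integral. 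The resulting $M$ integrands no longer depend on $m$, so averaging yields $\int_{\vx\in\mathcal{R}_j^p(\vTheta,\epsilon)}\ttP(\omega_j|\vx)\,\ttp(\vx)\,\mathrm{d}\vx$, which is exactly the claim.

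The only real obstacle is step (i): the region equality $\mathcal{R}_j^p(\vTheta,\epsilon)=\mathcal{R}_j^p(\vtheta^{(m)},\epsilon)$ is not automatic for fixed $\epsilon>0$ because averaging can keep the ensemble within $\epsilon$ of $p$ even when some individual members stray outside, so the reverse inclusion need not hold. This is harmless in the calibration definition, which is stated in the $\epsilon\to0$ limit, but it is worth stating the limit carefully and noting that the two-sided bound in step (i) forces the ensemble posterior to equal $p$ whenever every member does. With that clarified, the rest of the argument is a clean application of linearity and the hypothesis.
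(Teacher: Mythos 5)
Your proof follows the same route as the paper's: identify the ensemble region with the common member region, expand the ensemble posterior by linearity, and apply each member's calibration identity termwise before averaging, so the two arguments coincide in substance. If anything you are more careful than the paper, which simply asserts $\mathcal{R}_j^p({\bm \Theta},\epsilon)=\mathcal{R}_j^p({\bm \theta}^{(m)},\epsilon)$ without comment; the reverse inclusion you flag does in fact hold in the $\epsilon\rightarrow0$ limit, since applying the hypothesis at every level $p$ forces all members to agree pointwise on ${\tt P}(\omega_j|{\bm x};{\bm \theta}^{(m)})$, so the ensemble average equals the common member value and the two regions coincide exactly.
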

	\begin{proof}
		If 
		\[\mathcal{R}_j^p({\bm \theta}^{(m)}, \epsilon) = \mathcal{R}_j^p({\bm \theta}^{(m')}, \epsilon) \:\:\:\:\forall p,\omega_j,\:\:\:\epsilon\rightarrow0 \]
		The ensemble is also calibrated and the regions are the same:
		\begin{eqnarray}
		\mathcal{R}_j^p({\bm \Theta}, \epsilon) = \Bigg\{ {\bm x}\Bigg| \bigg|\frac{1}{M}\sum_{m=1}^{M}{\tt P}(\omega_j|{\bm x};{\bm \theta}^{(m)})-p\bigg|\leq \epsilon\Bigg\} = \mathcal{R}_j^p({\bm \theta}^{(m)})
		\:\:\:\:\forall p,\omega_j,\:\:\:\epsilon\rightarrow0
		\end{eqnarray}
		\begin{eqnarray}
		\int_{{\bm x}\in \mathcal{R}_j^p({\bm \Theta}, \epsilon)} \frac{1}{M}\sum_{m=1}^{M}{\tt P}(\omega_j|{\bm x};{\bm \theta}^{(m)}){\tt p}({\bm x})\mathrm{d}{\bm x} &=& \frac{1}{M}\sum_{m=1}^{M}\int_{{\bm x}\in \mathcal{R}_j^p({\bm \theta}^{(m)}, \epsilon)} {\tt P}(\omega_j|{\bm x};{\bm \theta}^{(m)}){\tt p}({\bm x})\mathrm{d}{\bm x} \nonumber \\
		&=& \frac{1}{M}\sum_{m=1}^{M} \int_{{\bm x}\in \mathcal{R}_j^p({\bm \theta}^{(m)}, \epsilon)} {\tt P}(\omega_j|{\bm x}){\tt p}({\bm x})\mathrm{d}{\bm x} \nonumber\\
		&=& \int_{{\bm x}\in \mathcal{R}_j^p({\bm \theta}^{(m)}, \epsilon)} {\tt P}(\omega_j|{\bm x}){\tt p}({\bm x})\mathrm{d}{\bm x} \nonumber\\
		&=& \int_{{\bm x}\in \mathcal{R}_j^p({\bm \Theta}, \epsilon)} {\tt P}(\omega_j|{\bm x}){\tt p}({\bm x})\mathrm{d}{\bm x} \nonumber
		\end{eqnarray}
	\end{proof}

	\begin{proposition}
		\label{prop:mem_global_tcal_ens_global_tcal}
		When class number $K>2$, if all members are globally top-label calibrated, then the ensemble is not necessarily global top-label calibrated.
	\end{proposition}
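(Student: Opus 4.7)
Since the proposition is a negative existential --- that calibration of the members does \emph{not} force calibration of the ensemble when $K>2$ --- the natural route is to exhibit a single concrete counterexample, and that is the plan.

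The construction I would give has $K=3$, $M=2$, and $N=4$ samples with the true labels split evenly between two of the three classes. I would pick every top-class probability emitted by any member to be exactly $1/2$, so the average of top-class probabilities over the $N$ samples is controlled by a single number, and arrange the member predictions so each member is correct on exactly two of the four samples. That makes the sample-based global top-label identity in Eq.~(\ref{eq:global_top_label_cal}) hold with both sides equal to $1/2$ for each member, so both members are globally top-label calibrated.

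The engineering trick is then to exploit the third class (available precisely because $K>2$) as a slack variable inside each probability vector. I would make the two members disagree on the top class for every sample and split the remaining $1/2$ of the mass asymmetrically between the other two classes, so that after averaging, the ensemble's top-class probability on every sample is the same value strictly less than $1/2$ (e.g.\ $0.45$). The residual mass is arranged so that the ensemble's $\arg\max$ on the four samples lands on the correct label for exactly two of them, giving ensemble accuracy $1/2$, while the average ensemble top-class probability stays at $0.45$. Since $0.45\neq 1/2$, the ensemble fails Eq.~(\ref{eq:global_top_label_cal}) and so is not globally top-label calibrated. The inequality in Eq.~(\ref{eq:prob_ens_leq_mem}) is what is really doing the work: it is strict on every sample (the members disagree on the top class), and this per-sample strictness propagates to the averages, producing the under-confidence gap.

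The only real obstacle is bookkeeping: choosing the residual-mass split on each sample so that the ensemble's top class is never tied, its per-sample top probability equals a common value strictly below $1/2$, and the ensemble accuracy is exactly $1/2$. With $K=3$ this is a small system of linear inequalities that can be satisfied by inspection; with $K=2$ the third-class slack is absent, which is why the proposition specifically requires $K>2$ and is consistent with the equivalence of perfect and top-label calibration in the binary case noted around Eq.~(\ref{eq:top_region}).
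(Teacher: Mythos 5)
Your proposal is correct, and it reaches the proposition by a genuinely different route than the paper. You argue by explicit finite counterexample; the paper instead argues by perturbation: assuming the implication held, it picks any sample $n$ and member $\tilde{m}$ with $\hat{y}_{\tt E}^{(n)} \neq \hat{y}_{\tilde{m}}^{(n)}$ and observes that global top-label calibration of member $\tilde{m}$ places \emph{no constraint} on ${\tt P}(\hat{y}_{\tt E}^{(n)}|{\bm x}^{(n)};{\bm \theta}^{(\tilde{m})})$, since that class is not member $\tilde{m}$'s top label; it then shifts mass by $\tau>0$ onto that class (using a third class as slack, which is exactly where $K>2$ enters, the same role it plays in your construction) to build a second ensemble $\tilde{\vTheta}$ with identically calibrated members and identical predictions but strictly larger average ensemble confidence, so both ensembles cannot satisfy Eq.~(\ref{eq:global_top_label_cal}) --- a contradiction. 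Your bookkeeping does close, which is worth certifying since you left it implicit: with true labels $(\omega_1,\omega_1,\omega_2,\omega_2)$, member tops $(\omega_1,\omega_2,\omega_2,\omega_3)$ and $(\omega_2,\omega_1,\omega_3,\omega_2)$ each at probability $0.5$, and the residual mass on each sample split $0.4/0.1$ with the $0.4$ placed on the intended ensemble winner by the non-winning member (and the other member giving it at most $0.3$), each member has average confidence $=$ accuracy $=1/2$, while the ensemble has untied top probability $0.45$ on every sample and accuracy $1/2$, violating Eq.~(\ref{eq:global_top_label_cal}). You actually have unused slack: with $N=4$ the ensemble accuracy is a multiple of $1/4$, so it can never equal $0.45$, and targeting accuracy exactly $1/2$ is unnecessary. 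Comparing what each buys: your counterexample is more elementary and fully self-contained, whereas the paper's argument is more informative about \emph{why} the failure occurs --- it shows it is generic, in that whenever any member disagrees with the ensemble top label on any sample, the ensemble's confidence is a free parameter undetermined by member calibration (and if no such disagreement exists, the ensemble's predictions coincide with every member's, so ensembling gains nothing, as the paper notes in its closing case).
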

	\begin{proof}
		Assume globally top-label calibrated members imply globally top-label calibrated ensemble, that is, given
		\begin{eqnarray}
		\frac{1}{N}\sum_{i=1}^{N}{\tt P}(\hat{y}_m^{(i)}|{\bm x}^{(i)};{\bm \theta}^{(m)}) &=& \frac{1}{N}\sum_{i=1}^{N}\delta(y^{(i)}, \hat{y}^{(i)}_m), \:\:\:\: m=1,...,M \label{eq:temp_mem_global_top_cal}
		\end{eqnarray}
		the following is true
		\begin{eqnarray}
		\frac{1}{N}\sum_{i=1}^{N} {\tt P}(\hat{y}_{\tt E}^{(i)}|{\bm x}^{(i)};\vTheta) &=&
		\frac{1}{N}\sum_{i=1}^{N} \delta(y^{(i)},\hat{y}_{\tt E}^{(i)}) \label{eq:temp_ens_global_top_cal}
		\end{eqnarray}
		If $\exists n, \tilde{m},\tau>0$, 
		such that $\hat{y}^{(n)}_{\tt E} \neq \hat{y}_{\tilde{m}}^{(n)}$, then it is possible to write
		\begin{eqnarray}
		{\tt P}(\hat{y}_{\tt E}^{(n)}|{\bm x}^{(n)};\vTheta)
		=
		\left(\frac{1}{M}\sum_{m\neq\tilde{m}}{\tt P}(\hat{y}_{\tt E}^{(n)}|{\bm x}^{(n)};{\bm\theta}^{(m)})
		\right)
		+\frac{1}{M}{\tt P}(\hat{y}_{\tt E}^{(n)}|{\bm x}^{(n)};{\bm\theta}^{({\tilde{m}})})\label{eq:apptlc}
		\end{eqnarray}
		For top-label calibration there are no constraints on the second term in Eq.~(\ref{eq:apptlc}) as it is not the top-label for model ${\bm\theta}^{({\tilde{m}})}$. Thus there are a set of models that satisfy the top-label calibration constraints for
		member ${\tilde{m}}$ that only need to satisfy the following constraints
		\begin{eqnarray}
		0 \leq {\tt P}(\hat{y}_{\tt E}^{(n)}|{\bm x}^{(n)};{\bm\theta}^{({\tilde{m}})}) < {\tt P}(\hat{y}_{\tilde{m}}^{(n)}|{\bm x}^{(n)};{\bm\theta}^{({\tilde{m}})}) \leq 1
		\label{eq:appconst}
		\end{eqnarray}
		and the standard sum-to-one constraint over all classes. Consider replacing member $\tilde{m}$ of the ensemble with a member having parameters ${\tilde{\bm\theta}}^{({\tilde{m}})}$, to yield $\tilde{\vTheta}$, that satisfies
		\begin{eqnarray}
		\max_\omega
		\left\{
		{\tt P}(\omega|{\bm x}^{(n)};{\tilde{\bm\theta}}^{({\tilde{m}})})
		\right\}
		\!\!\!\!
		&=& 
		\!\!\!\!
		\max_\omega\left\{{\tt P}(\omega|{\bm x}^{(n)};{{\bm\theta}}^{({\tilde{m}})})\right\}
		=\hat{y}_{\tilde{m}}^{(n)} 
		\\
		{\tt P}(\hat{y}_{\tilde{m}}^{(n)}|{\bm x}^{(n)};{\tilde{\bm\theta}}^{({\tilde{m}})})
		\!\!\!\!
		&=& 
		\!\!\!\!
		{\tt P}(\hat{y}_{\tilde{m}}^{(n)}|{\bm x}^{(n)};{{\bm\theta}}^{({\tilde{m}})})
		\\
		{\tt P}(\hat{y}_{\tt E}^{(n)}|{\bm x}^{(n)};{\tilde{\bm\theta}}^{({\tilde{m}})})
		\!\!\!\!
		&=& 
		\!\!\!\!
		{\tt P}(\hat{y}_{\tt E}^{(n)}|{\bm x}^{(n)};{\bm\theta}^{({\tilde{m}})}) + \tau
		\end{eqnarray}
		where $\tau>0$, and the standard sum-to-one constraint is satisfied, and all other predictions are unaltered. This results in the following constraints
		\begin{eqnarray}
		\max_\omega\left\{{\tt P}(\omega|{\bm x}^{(n)};\tilde{\vTheta})\right\}
		\!\!\!\!&=&\!\!\!\!
		\max_\omega\left\{{\tt P}(\omega|{\bm x}^{(n)};{\vTheta})\right\}
		=\hat{y}_{\tt E}^{(n)} 
		\label{eq:ens_acc}
		\\
		{\tt P}(\hat{y}_{\tt E}^{(n)}|{\bm x}^{(n)};\vTheta)
		\!\!\!\!&<&\!\!\!\! 
		{\tt P}(\hat{y}_{\tt E}^{(n)}|{\bm x}^{(n)};\tilde{\vTheta}) 
		\label{eq:ens_prob}
		\end{eqnarray}
		The accuracy of the two ensembles $\vTheta$ and ${\tilde{\vTheta}}$ are the same from Eq.~(\ref{eq:ens_acc}), but the probabilities associated with those predictions cannot be the same from Eq.~(\ref{eq:ens_prob}), so both ensemble predictions cannot be calibrated, as assuming that the ensemble prediction for $\vTheta$ is calibrated
		\begin{eqnarray}
		\frac{1}{N}\sum_{i=1}^{N} {\tt P}(\hat{y}_{\tt E}^{(i)}|{\bm x}^{(i)};{\tilde{\vTheta}})  > 
		\frac{1}{N}\sum_{i=1}^{N} {\tt P}(\hat{y}_{\tt E}^{(i)}|{\bm x}^{(i)};\vTheta) =
		\frac{1}{N}\sum_{i=1}^{N} \delta(y^{(i)},\hat{y}_{\tt E}^{(i)}) \label{eq:temp_ens_global_top_cal}
		\end{eqnarray}
		Hence there are multiple values of ${\tt P}(\hat{y}_{\tt E}^{(n)}|{\bm x}^{(n)};\vTheta)$ for which all the models satisfy the top-calibration constraints, but these cannot all be consistent with Eq.~(\ref{eq:temp_ens_global_top_cal}). For the situation where there is no sample or model where $\hat{y}^{(n)}_{\tt E} \neq \hat{y}_{\tilde{m}}^{(n)}$ then the predictions for all models for all samples are the same as the ensemble prediction, so by definition there can be no performance gain.
		
	\end{proof}
	
	\subsection{Global General Calibration and Top-label Calibration}
	\label{app:global_cal}
	To demonstrate the differences between global top-label calibration and global calibration, a set of ensemble  member predictions were generated using Algorithm 1, this ensures that the predictions are perfectly calibrated.  Since the member predictions are perfectly calibrated, the ensemble members will be globally calibrated. Figure \ref{fig:cal_error_epsilon} (a) shows the performance in terms of ACE of the ensemble prediction as the value of $\epsilon$ increases, note when $\epsilon=1$ this is a global calibration version of ACE. It can be seen that as $\epsilon$ increases ACE decreases, and for the global case reduces to zero for the ensemble predictions as the theory states. 
	
	In terms of top-label calibration, as the ensemble members are perfectly calibrated, they will again be global top-label calibrated. This is illustrated in Figure \ref{fig:cal_error_epsilon} (b) where ECE is zero for all ensemble members. For top-label calibration the value of ECE does not decrease to zero as the $\epsilon\rightarrow 1$, again as the theory states. This is because the underlying probability regions associated with each of the members of the ensemble are different. Hence, even for perfectly calibrated ensemble members, the ensemble prediction is not global top-label calibrated.
	
	\begin{figure}[htb]
		\centering
		\begin{subfigure}{.45\textwidth}
			\centering
			\includegraphics[width=\linewidth]{./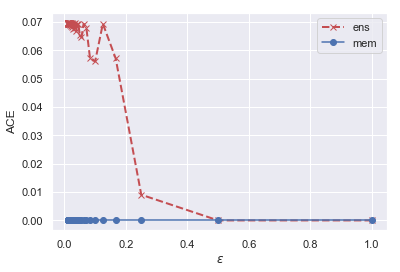}
			\caption{Calibration}
		\end{subfigure}
		\begin{subfigure}{.45\textwidth}
			\centering
			\includegraphics[width=\linewidth]{./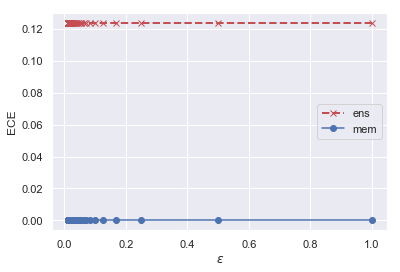}
			\caption{Top-label Calibration}
		\end{subfigure}
		\caption{ACE (calibration) and ECE (top-label calibration) of a perfectly calibrated set of ensemble members, as the value of $\epsilon$ varies.}
		\label{fig:cal_error_epsilon}
	\end{figure}

	\subsection{Toy Datasets}
	\label{app:toy_datasets}
	\begin{example} 
		In this example, we show the difference between all-label calibration and top-label calibration which consider the different regions in Eq.~(\ref{eq:region}) and Eq.~(\ref{eq:top_region}).
		
		Assuming ${\tt p}({\bm x})\propto1$, the whole input space $\mathcal{X}$ is consisted of three regions $\mathcal{R}_1, \mathcal{R}_2$ and $\mathcal{R}_3$, and  
		\begin{eqnarray}
		\int_{{\bm x}\in \mathcal{R}_1}{\tt p}({\bm x})d{\bm x}=\int_{{\bm x}\in \mathcal{R}_2}{\tt p}({\bm x})d{\bm x}=\int_{{\bm x}\in \mathcal{R}_3}{\tt p}({\bm x})d{\bm x}. 
		\end{eqnarray}
		The corresponding system prediction $\hat{{\bm P}}$ and the true distribution ${\bm P}$ is:
		\begin{eqnarray}
		\hat{{\bm P}} =\Big({\tt P}(\omega_j|{\bm x}\in\mathcal{R}_r;{\bm \theta})\Big)= \begin{blockarray}{ccccc}
		\omega_1 & \omega_2 & \omega_3 & \omega_4 \\
		\begin{block}{(cccc)c}
		0.5 & \underline{0.4} & 0.05 & 0.05 & \mathcal{R}_1 \\
		0.3 & \underline{0.4} & 0.2 & 0.1 & \mathcal{R}_2 \\
		0.3 & 0.3 & 0.35 & 0.05 & \mathcal{R}_3 \\
		\end{block}
		\end{blockarray} \\
		{\bm P} = \begin{blockarray}{ccccc}
		\omega_1 & \omega_2 & \omega_3 & \omega_4 \\
		\begin{block}{(cccc)c}
		0.5 & \underline{0.4-\tau} & 0.05 & 0.05+\tau & \mathcal{R}_1 \\
		0.3-\tau & \underline{0.4+\tau} & 0.2 & 0.1 & \mathcal{R}_2 \\
		0.3+\tau & 0.3 & 0.35 & 0.05-\tau & \mathcal{R}_3 \\
		\end{block}
		\end{blockarray},\:\:\:\:\tau > 0
		\end{eqnarray}
		It can be verified that 
		\begin{eqnarray}
		\int_{\vx\in\gR_j^p(\vtheta,\epsilon)} \ttP(\omega_j|\vx,\vtheta)\ttp(\vx)\mathrm{d}\vx 
		&=& \int_{\vx\in\gR_j^p(\vtheta,\epsilon)} \ttP(\omega_j|\vx)\ttp(\vx)\mathrm{d}\vx,\:\:\:\:\forall \omega_j,p,\epsilon\rightarrow0
		\end{eqnarray}
		However, when $p=0.4,j=2$
		\begin{eqnarray}
		\int_{\vx\in\tilde{\gR}_j^p(\vtheta,\epsilon)} \ttP(\omega_j|\vx,\vtheta)\ttp(\vx)\mathrm{d}\vx 
		&\neq&
		\int_{\vx\in\tilde{\gR}_j^p(\vtheta,\epsilon)} \ttP(\omega_j|\vx,\vtheta)\ttp(\vx)\mathrm{d}\vx,\:\:\:\:\epsilon\rightarrow0
		\end{eqnarray}
	\end{example}
	
	\begin{example} 
		This example shows that combination of calibrated members yields uncalibrated ensemble.  Algorithm 1 generates the true data distribution $\vp$ by sampling from a Dirichlet distribution with equal concentration parameters of 1.  To generate the member predictions, the $N$ samples are randomly assigned to $N/b$ bins with size of $b$.  In each bin, the member predictions $\hat{\vp}$ are equally the average of true data distribution of the associated samples.  This ensures that for each region, Eq.~(\ref{eq:cal_def}) holds for each member.  However, regions for different member are different due to the random assignment. Therefore, the corresponding ensemble is not automatically calibrated. Figure~\ref{fig:cal_error_epsilon} shows that the ensemble is uncalibrated with ACE of 0.0697.
		\begin{algorithm}[htb]
			\SetAlgoLined
			\KwResult{$\{{\bm p}^{(i)}\}_{i=1}^{N}$,$\{\hat{\bm p}^{(i)}, \hat{y}^{(i)}\}_{i=1}^{N}$}
			$b=2$; ~//bin size, number of samples in one bin\;
			$K=4$; ~//number of classes\;
			$M=10$; ~//number of members\;
			$N=1000000$; ~//number of samples\;
			${\bm p}^{(i)} \sim \text{Dirichlet}({\bm \alpha}={\bm 1}),i=1,...,N$; ~//true data distribution, sampled from Dirichlet distribution with equal concentration parameters of 1\;
			${\bm I} = [1, 2,..., N]$; // index vector\;
			\For{$m$ in $[1, ..., M]$}{
				$\Tilde{{\bm I}}\leftarrow$ shuffle(${\bm I}$)\;
				\For{$j$ in $[1,...,\lceil N/b\rceil]$}{
					${\bm B}_j=(b*(j-1), \min\{b*j,N\}]$\;
					$\hat{\bm p}_{{\bm B}_j} = \frac{1}{b}\sum_{l\in{\bm B}_j}{\bm p}^{(\tilde{I}_l)}$\;
				}
				\For{$i$ in $[1, ...,N]$}{
					$j=\lceil\frac{i}{b}\rceil$\;
					$\hat{\bm p}^{(\Tilde{I}_i,m)}=\hat{\bm p}_{{\bm B}_j}$\;
				}
			}
			\caption{Algorithm for generating calibrated members that yield uncalibrated ensemble}
		\end{algorithm}
	\end{example}
	
	\begin{example}
		In this example, we show that for a finite number of samples, the regions $\gS_j^p(\vtheta,\epsilon)$ and $\tilde{\gS}_j^p(\vtheta,\epsilon)$ derived from the samples is different from the theoretical regions, leading to difference between theoretical calibration error measures and the values estimated from the finite samples. 
		Algorithm 2 generates data with difference between finite sample-based calibration error and theoretical error.
		The theoretical ACE of the predicted probabilities in Algorithm 2 is $\frac{9}{32}$. However, the finite sample-based ACE is 0.
		
		The true data distribution is $\ttP(\omega_j|{\bm x})=\frac{1}{4},{\tt p}({\bm x})\propto 1$.
		The samples in $\mathcal{D}$ are assigned to bins of size 2.  The type of bin that a sample is assigned to determines the predicted probability of sample. Considering each class $\omega_j$, there are three types of bins:\\
		$\textbf{B}_{p=1}$: both samples belong to class $\omega_j$.\\
		$\textbf{B}_{p=0.5}$: only one sample is class $\omega_j$. \\
		$\textbf{B}_{p=0}$: both samples are not class $\omega_j$.\\
		\begin{eqnarray}
		{\tt P}({\bm x}\in \textbf{B}_{1}) = \frac{1}{16},\:\: {\tt P}({\bm x}\in \textbf{B}_{0.5}) = \frac{6}{16}, \:\: {\tt P}({\bm x}\in \textbf{B}_{0}) = \frac{9}{16} 
		\end{eqnarray}
		then
		\begin{eqnarray}
		\int_{{\bm x}\in\mathcal{R}^{p}_j({\bm \theta},0)}\big({\tt P}(\omega_j|{\bm x};{\bm \theta}) - {\tt P}(\omega_j|{\bm x})\big){\tt p}({\bm x})d{\bm x} &=& \int_{{\bm x}\in\mathcal{R}^{p}_j({\bm \theta},0)}\big(p - \frac{1}{4})\big){\tt p}({\bm x})d{\bm x}\\
		&=& (p-\frac{1}{4}){\tt P}({\bm x}\in \textbf{B}_p)
		\end{eqnarray}
		therefore,
		\begin{eqnarray}
		{\tt ACE} ({\bm \theta})
		&=& \frac{1}{4}\int_0^1\Big|\sum_{j=1}^4 \int_{{\bm x}\in\mathcal{R}^{p}_j({\bm \theta},0)}\big({\tt P}(\omega_j|{\bm x};{\bm \theta}) - {\tt P}(\omega_j|{\bm x})\big){\tt p}({\bm x})d{\bm x}\Big|dp \\
		&=&  \sum_{p\in\{0, 0.5, 1\}} \Big|(p-\frac{1}{4}){\tt P}({\bm x}\in \textbf{B}_p) \Big|= \frac{9}{32}
		\end{eqnarray}
		\begin{algorithm}
			\SetAlgoLined
			\KwResult{$\{\hat{\bm p}^{(i)}, \hat{y}^{(i)}\}_{i=1}^{N}$}
			$b=2$; ~//bin size, number of samples in one bin\;
			$K=4$; ~//number of classes\;
			${\bm g} =[g_1,...,g_N]$; ~//vector of ground-truth labels , where $g\in\{1,...,K\}$.  Numbers of labels for different classes are equal\;
			${\bm I} = [1, 2,..., N]$; // index vector\;
			$\Tilde{{\bm I}}\leftarrow$ shuffle(${\bm I}$)\;
			\For{$j$ in $[1,...,\lceil N/b\rceil]$}{
				${\bm B}_j=(b*(j-1), \min\{b*j,N\}]$\;
				$\hat{\bm p}_{{\bm B}_j} = \frac{1}{b}\sum_{l\in{\bm B}_j}[\delta(1, g_{\Tilde{I}_l}), \delta(2, g_{\Tilde{I}_l}), ..., \delta(K, g_{\Tilde{I}_l})]$\;
			}
			\For{$i$ in $[1, ...,N]$}{
				$j=\lceil\frac{i}{b}\rceil$\;
				$\hat{\bm p}^{(\Tilde{I}_i)}=\hat{\bm p}_{{\bm B}_j}$\;
			}
			\caption{Algorithm for generating data with difference between finite sample-based ACE and theoretical ACE.}
		\end{algorithm}
	\end{example}
	
	\subsection{Additional Experimental Results}
	\label{app:additional_results}
	In this section, we show some comparison experiments to the empirical results in Section \ref{sec:empirical_results}.  We conducted experiments on CIFAR-100 and CIFAR-10 dataset.  Table~\ref{tab:inidividual_model_CIFAR100} and \ref{tab:inidividual_model_CIFAR10} display the performance of inidividual models,  LeNet, DenseNet 100, DenseNet 121, and wide ResNet 28-10.  All systems are trained with data augmentation of random cropping and horizontal flipping, and simple mean/std normalization. The original training/test images in CIFAR datasets is 50,000/10,000. We hold out 5,000 images from training set as validation set (10\%) for temperature and combination weights optimization, this leads to a slight accuracy degradation compared to training with all 50,000 images.  We have 10 runs of our experiment to obtain the deviations.
	
	In Section \ref{sec:empirical_results}, we presented ensemble calibration on CIFAR-100.  The counterpart on CIFAR-10 is given in Table \ref{tab:ens_calibration_cifar10}.  Other than separately specified, all sample based evaluation criteria in this paper use 15 bins following previous literature \citep{guo2017calibration}. The temperatures in pre-, post- and dynamic post-combination modes are optimized on the validation set by minimizing ECE \citep{liang2020neural}, using SGD with learning rate of 0.1 for 400 iterations. 
	It can be observed that combination of DenseNet and ResNet improves the calibration performance, while combination of LeNet doesn't help.  This is because LeNet is not as over-confident as DenseNet and ResNet (as shown in Figure \ref{fig:cifar100_tce_temp}).  Hence the simple ensemble combination doesn't help, but aggravates the calibration.
	The three temperature-based calibration methods effectively improve the system calibration performance on CIFAR-10 as well.
	
	Table~\ref{tab:structure_comb_auc} gives the ensemble combination based on AUC weights \citep{zhong2013accurate}.  The AUC weights are much even than the Max LL weights in Table \ref{tab:structure_comb}.  The structures combined are first calibrated, nevertheless, applying post-combination calibration to the ensemble obtains gains.  
	We evaluated post-combination for topology combination in Table \ref{tab:structure_comb_detail}.  The post- and dynamic post-combination methods are applied to calibrate the topologies and the topology ensemble.  The dynamic temperature method shows clear advantage in obtaining calibrated ensemble of mutiple topologies.
	
	\begin{table}[htb]
		\caption{Individual model performance on CIFAR-100. Full data training (100\%Train) and training with 5,000 images held out (90\%Train) are presented. * denotes results from the original papers.}
		\centering
		\begin{tabular}{cccccccccccccccc}
			\hline
			\textbf{ Model}& \textbf{\%Train} & \textbf{Acc.}(\%) &  \textbf{NLL} & \textbf{ACCE} &\textbf{ACE} & \textbf{ECCE} & \textbf{ECE} \\
			& & (\%) &  & (10$^{-4}$) &(10$^{-4}$) & (10$^{-2}$) & (10$^{-2}$) \\
			\hline
			\multirow{2}{*}{LEN} & 100 &  42.47 & 2.2730 & 21.07 & 7.62 & 13.05 & 1.82\\
			& 90 & 42.12 & 2.2652 & 22.80 & 10.11 & 13.20 & 2.86\\
			\hline
			\multirow{3}{*}{DSN 100} & 100$^*$ & 76.21 & - & - & - & - & -\\
			& 100 & 76.71 & 1.1022 & 30.54 & 25.85 & 14.53 & 12.65\\
			& 90 & 76.00 & 1.0591 & 30.22 & 24.55 & 13.91 & 11.72\\
			\hline
			\multirow{2}{*}{DSN 121}  & 100 & 80.00 & 0.8438 & 23.39 & 17.29 & 11.52 & 8.59\\
			& 90 & 78.87 & 0.8981 & 24.83 & 19.00 & 11.99 & 9.41\\
			\hline
			\multirow{3}{*}{RSN} & 100$^*$ & 80.75 & - & - & - & - & - \\
			& 100 & 80.99 & 0.7711 & 18.67 & 8.68 & 10.25 & 5.02\\
			& 90 & 80.06 & 0.7985 & 18.72 & 8.83 & 10.23 & 4.76\\
			\hline
		\end{tabular}
		\label{tab:inidividual_model_CIFAR100}
	\end{table}
	
	\begin{table}[htb]
		\caption{Individual model performance on CIFAR-10. Full data training (100\%Train) and training with 5,000 images held out (90\%Train) are presented. * denotes results from the original papers.}
		\centering
		\begin{tabular}{cccccccccccccccc}
			\hline
			\textbf{ Model}& \textbf{\%Train} & \textbf{Acc.}(\%) &  \textbf{NLL} & \textbf{ACCE} &\textbf{ACE} & \textbf{ECCE} & \textbf{ECE} \\
			& & (\%) &  & (10$^{-4}$) &(10$^{-4}$) & (10$^{-2}$) & (10$^{-2}$) \\
			\hline
			\multirow{2}{*}{LEN} & 100 & 75.07 & 0.7329 & 60.85 & 23.97 & 3.55 & 1.24 \\
			& 90 & 75.31 & 0.7280 & 77.17 & 44.49 & 4.16 & 2.09\\
			\hline
			\multirow{3}{*}{DSN 100} & 100$^*$ & 94.23 & - & - & - & - & -\\
			& 100 & 95.39 & 0.2003 & 61.04 & 54.74 & 3.04 & 2.77\\
			& 90 & 94.92 & 0.2181 & 65.50 & 60.54 & 3.27 & 2.99\\
			\hline
			\multirow{2}{*}{DSN 121} & 100 &95.66 & 0.1832 & 61.56 & 56.63 & 3.01 & 2.87\\
			& 90 & 95.53 & 0.1993 & 62.79 & 57.16 & 3.06 & 2.89 \\
			\hline
			\multirow{3}{*}{RSN 28} & 100$^*$ & 96.00 & - & - & - & - & -  \\
			& 100 & 95.98 & 0.1524 & 51.67 & 46.70 & 2.61 & 2.35\\
			& 90 & 95.52 & 0.1692 & 58.07 & 52.70 & 2.89 & 2.62\\
			\hline
		\end{tabular}
		\label{tab:inidividual_model_CIFAR10}
	\end{table}
	
	\begin{table}[htb]
		\caption{Calibration using temperature annealing on CIFAR-10.  The temperatures are optimized to minimize ECE on validation set.  In the `pre' mode, each member is scaled with one separate temperature. `dyn.' denotes dynamic temperature scaling in post-combination mode using 6 region-based temperatures. The three structures investigated are LeNet 5, DenseNet 121, DenseNet 100 and Wide ResNet 28. Ranges indicate $\pm2\sigma$.}
		\tiny
		\centering
		\begin{tabular}{cc||cccccccccccc}
			\hline
			\multirow{1}{*}{\textbf{Model}}& \multirow{1}{*}{\textbf{Cal.}}& \textbf{Acc.}(\%) &  \textbf{NLL} & \textbf{ACCE}(10$^{-4}$) &\textbf{ACE}(10$^{-4}$) & \textbf{ECCE}(10$^{-2}$) & \textbf{ECE}(10$^{-2}$) & \textbf{SKCE}(10$^{-4}$)\\
			\hline
			\multirow{4}{*}{LEN} 
			& ---& 80.11 & 0.6067$\pm$0.0031 & 187.73$\pm$3.22 & 181.18$\pm$1.86 & 9.55$\pm$0.26 & 9.18$\pm$0.17 & \textbf{447.50$\pm$1.78}\\
			& pre & 80.04 & 0.5915$\pm$0.0062 & 76.68$\pm$6.11 & 53.50$\pm$5.10 & 4.01$\pm$0.24 & 1.91$\pm$0.34 & 546.65$\pm$5.03\\
			& post & 80.11 & \textbf{0.5664$\pm$0.0033} & 57.84$\pm$3.90 & \textbf{24.00$\pm$4.62} & \textbf{3.66$\pm$0.21} & \textbf{1.15$\pm$0.31} & 568.86$\pm$6.55\\
			& dyn. & 80.11 & 0.5718$\pm$0.0071 & \textbf{57.79$\pm$5.58} & 25.45$\pm$8.26 & 3.74$\pm$0.36 & 1.29$\pm$0.52 & 568.94$\pm$7.07\\
			\hline
			\multirow{4}{*}{DSN 100} 
			& --- & 96.19 & 0.1230$\pm$0.0013 & 28.01$\pm$2.23 & 13.30$\pm$1.95 & 1.47$\pm$0.10 & 0.55$\pm$0.13 & 871.27$\pm$0.72\\
			& pre & 96.19 & 0.1227$\pm$0.0013 & 27.92$\pm$2.56 & 13.00$\pm$2.17 & 1.46$\pm$0.10 & 0.54$\pm$0.12 & 870.62$\pm$0.91\\
			& post & 96.19 & 0.1228$\pm$0.0015 & 28.27$\pm$2.94 & 13.27$\pm$1.80 & 1.47$\pm$0.13 & 0.55$\pm$0.13 & 870.24$\pm$3.31\\
			& dyn. & 96.19 & \textbf{0.1224$\pm$0.0015} & \textbf{27.15$\pm$2.58} & \textbf{11.16$\pm$1.93} & \textbf{1.43$\pm$0.13} & \textbf{0.45$\pm$0.11} & \textbf{867.59$\pm$3.52}\\
			\hline
			\multirow{4}{*}{DSN 121} 
			& --- & 96.35 & 0.1202$\pm$0.0010 & 27.98$\pm$1.17 & 15.31$\pm$1.46 & 1.43$\pm$0.12 & 0.69$\pm$0.14 & 895.02$\pm$0.59\\
			& pre & 96.35 & 0.1198$\pm$0.0012 & 27.85$\pm$1.33 & 15.08$\pm$1.46 & 1.42$\pm$0.12 & 0.67$\pm$0.13 & 894.55$\pm$0.98\\
			& post & 96.35 & 0.1194$\pm$0.0027 & \textbf{27.37$\pm$2.67} & \textbf{14.54$\pm$2.73} & 1.41$\pm$0.14 & 0.64$\pm$0.21 & 892.48$\pm$8.64\\
			& dyn. & 96.35 & \textbf{0.1191$\pm$0.0012} & 27.66$\pm$1.56 & 14.60$\pm$2.42 & 1.42$\pm$0.15 & 0.66$\pm$0.14 & 894.06$\pm$3.28\\
			\hline
			\multirow{4}{*}{RSN} 
			& ---& 96.51 & 0.1081$\pm$0.0008 & 25.14$\pm$1.34 & 11.73$\pm$1.39 & 1.37$\pm$0.10 & 0.58$\pm$0.10 & 894.64$\pm$1.07\\
			& pre & 96.51 & 0.1073$\pm$0.0009 & 24.54$\pm$2.04 & 10.01$\pm$1.12 & 1.32$\pm$0.10 & 0.50$\pm$0.11 & 891.23$\pm$1.80\\
			& post & 96.51 & \textbf{0.1069$\pm$0.0007} & 23.96$\pm$1.35 & 8.83$\pm$0.96 & 1.32$\pm$0.15 & 0.50$\pm$0.09 & \textbf{881.57$\pm$4.60}\\
			& dyn. & 96.51 & \textbf{0.1070$\pm$0.0007} & \textbf{23.84$\pm$1.84} & \textbf{8.57$\pm$1.54} & \textbf{1.30$\pm$0.15} & \textbf{0.47$\pm$0.13} & 883.58$\pm$5.70\\
			\hline
		\end{tabular}
		
		\label{tab:ens_calibration_cifar10}
	\end{table}
	
	\begin{table}[htb!]
		\caption{Topology ensembles for CIFAR-100, optimal weights based on AUC.  Calibrations of each topology and ensemble using post-combination mode ("post" in Table~\ref{tab:ens_calibration}).}
		\small
		\centering
		\begin{tabular}{ccccc|cccccc}
			\hline
			\textbf{Weight Est.} & \multicolumn{4}{c|}{\textbf{Comb. Weight}} & \textbf{Acc.}& \textbf{Ens Cal.}  & \textbf{NLL} & \textbf{ACE}& \textbf{ECE}  \\
			&LEN & DSN100 & DSN121 & RSN &(\%)  & & & (10$^{-4}$)&  (10$^{-2}$)& \\ 
			\hline
			\multirow{6}{*}{AUC}&\multirow{2}*{0.20} & \multirow{2}*{0.26} & \multirow{2}*{0.27} & \multirow{2}*{0.27}& \multirow{2}*{83.30}  & ---  & 0.6649 & 20.84 & 10.07\\
			&&  && &  & \checkmark & 0.6096 & 2.60 & 2.00\\
			%\cline{2-10}
			&\multirow{2}*{---} & \multirow{2}*{0.33} & \multirow{2}*{0.33} & \multirow{2}*{0.34}  & \multirow{2}*{83.55} & --- & 0.5776 & 3.67 & 1.89 \\
			& && && & \checkmark  & 0.5787 & 3.03 & 1.38  \\
			%\cline{2-10}
			&\multirow{2}*{---} & \multirow{2}*{---} & \multirow{2}*{0.49}& \multirow{2}*{0.51} & \multirow{2}*{83.78} & --- & 0.5820 & 3.54 & 2.03\\
			&& & &  &  & \checkmark & 0.5805 & 2.30 & 1.34 \\
			\hline
		\end{tabular}
		
		\label{tab:structure_comb_auc}
	\end{table}
	
	\begin{table}[htb]
		\caption{Topology combination on CIFAR-100.  Dynamic mode uses 6 region-based temperatures.}
		\centering
		\begin{tabular}{cccc|ccccccc}
			\hline
			\multicolumn{4}{c|}{\textbf{Comb. Weight}}& \textbf{Struct cal.} & \textbf{Ens Cal.} & \textbf{Acc.}(\%) & \textbf{NLL} & \textbf{ACE} & \textbf{ECE}  \\
			LEN & DSN100 & DSN121& RSN & & & & & & \\ 
			\hline
			- & 0.33 & 0.33 & 0.33& - & - & 83.54 & 0.5846 &0.0007 & 0.0334 \\
			- & 0.33 & 0.33 & 0.33& - & post & 83.54 & 0.5769 & 0.0002& 0.0132 \\
			- & 0.33 & 0.33 & 0.33& - & dyn.& 83.54& 0.5751 & 0.0002& 0.0099\\
			\hline
			- & 0.33 & 0.33 & 0.33& post & -& 83.57 & 0.5779& 0.0004& 0.0189 \\
			- & 0.33 & 0.33 & 0.33&  post & post & 83.57& 0.5778& 0.0002& 0.0118\\
			- & 0.33 & 0.33 & 0.33& dyn. & -& 83.59 & 0.5771 & 0.0004 & 0.0211\\
			- & 0.33 & 0.33 & 0.33& dyn. & dyn. & 83.59 & 0.5730 & 0.0002 & 0.0112\\
			\hline
			- & 0.22 & 0.39 & 0.39 & - & - & 83.78 & 0.5822 & 0.0007 & 0.0327 \\
			- & 0.22 & 0.39 & 0.39 & - & post& 83.78 & 0.5724 & 0.0002 & 0.0135 \\
			- & 0.22 & 0.39 & 0.39 & - & dyn. & 83.78 & 0.5705 & 0.0002 & 0.0113 \\
			\hline
			- &0.22& 0.30 & 0.48 & post & - & 83.80 & 0.5741 & 0.0004 & 0.0200 \\
			- &0.22& 0.30 & 0.48 & post & post & 83.80 & 0.5714 & 0.0002 & 0.0129 \\
			- &0.23 & 0.31 & 0.46 & dyn. & - & 83.75 & 0.5741 & 0.0004 & 0.0211\\
			- &0.23 & 0.31 & 0.46 & dyn. & dyn. & 83.75 & 0.5700 & 0.0002 & 0.0153 \\ 
			\hline
			\hline
			0.25 & 0.25 & 0.25 & 0.25 & - & - & 83.34 & 0.7238 & 0.0032 & 0.1551\\
			0.25 & 0.25 & 0.25 & 0.25 & - & post & 83.34 & 0.6043 & 0.0003 & 0.0196\\
			0.25 & 0.25 & 0.25 & 0.25 & - & dyn. & 83.34 & 0.6027 & 0.0002 & 0.0122 \\
			\hline
			0.25 & 0.25 & 0.25 & 0.25 & post & - & 83.25 & 0.6947 & 0.0025 & 0.1224\\
			0.25 & 0.25 & 0.25 & 0.25 & post & post & 83.25 & 0.6262 & 0.0004 & 0.0218\\
			0.25 & 0.25 & 0.25 & 0.25 & dyn. & - & 83.22 & 0.6978 & 0.0026 & 0.1258\\
			0.25 & 0.25 & 0.25 & 0.25 & dyn. & dyn. & 83.22& 0.6184 & 0.0002 & 0.0137\\
			\hline
			0.01 & 0.19 & 0.42 & 0.38 & - & - & 83.74 & 0.5859 & 0.0008 & 0.0374\\
			0.01 & 0.19 & 0.42 & 0.38 & - & post & 83.74 & 0.5710 & 0.0001 & 0.0114\\
			0.01 & 0.19 & 0.42 & 0.38 & - & dyn. & 83.74 & 0.5703 & 0.0001 & 0.0085\\
			\hline
			0.02 & 0.19 & 0.30 & 0.49 & post & - & 83.75 & 0.5766 & 0.0005 & 0.0224\\
			0.02 & 0.19 & 0.30 & 0.49 & post & post &  83.75 & 0.5698 & 0.0001 & 0.0120\\
			0.02 & 0.21 & 0.30 & 0.47 & dyn. & - & 83.77& 0.5784 & 0.0006 & 0.0282\\
			0.02 & 0.21 & 0.30 & 0.47 & dyn. & dyn. & 83.77& 0.5698 & 0.0002 & 0.0167\\ 
			\hline
		\end{tabular}
		
		\label{tab:structure_comb_detail}
	\end{table}

\end{document}